\newif\ifAUTH
\definecolor{mypurple}{rgb}{0.501, 0, 0.501}
\definecolor{myorange}{rgb}{0.7, 0.44, 0}
\definecolor{myblue}{rgb}{0.0039, 0.45098, 0.98039}
\definecolor{mygreen}{rgb}{0.0078, 0.6196, 0.450980}
\definecolor{mygray}{rgb}{0.83, 0.83, 0.83}
\definecolor{MyLightGray}{gray}{0.925}
\newtheorem{prop}{Proposition}
\newcommand{\R}{\mathbb{R}}
\newcommand{\norm}{\mathcal{N}}
\newcommand{\E}{\mathbb{E}}
\newcommand{\nnmu}{\hat{\mu}_\theta}
\newcommand{\nnLambda}{\hat{\Lambda}_\phi}
\newcommand{\ftmu}{\hat{\mu}}
\newcommand{\ftmud}{\vec{\mu}}
\newcommand{\ftp}{\hat{\Lambda}}
\newcommand{\ftpd}{\vec{\Lambda}}
\newcommand{\sep}{\!\;|\;\!}
\title{Understanding Pathologies of Deep Heteroskedastic Regression}
\author[1]{Eliot Wong-Toi}
\author[1]{{Alex Boyd}}
\author[2]{Vincent Fortuin}
\author[1,3]{Stephan Mandt}
\affil[1]{%
    Department of Statistics\\
    University of California, Irvine, USA
}
\affil[2]{%
    Helmholtz AI, Munich, Germany
}
\affil[3]{%
    Department of Computer Science\\
    University of California, Irvine, USA
  }
\begin{document}
\maketitle
\begin{abstract}
Deep, overparameterized regression models are notorious for their tendency to overfit. This problem is exacerbated in heteroskedastic models, which predict both mean and residual noise for each data point. At one extreme, these models fit all training data perfectly, eliminating residual noise entirely; at the other, they overfit the residual noise while predicting a constant, uninformative mean. We observe a lack of middle ground, suggesting a phase transition dependent on model regularization strength. Empirical verification supports this conjecture by fitting numerous models with varying mean and variance regularization. To explain the transition, we develop a theoretical framework based on a statistical field theory, yielding qualitative agreement with experiments. As a practical consequence, our analysis simplifies hyperparameter tuning from a two-dimensional to a one-dimensional search, substantially reducing the computational burden. Experiments on diverse datasets, including UCI datasets and the large-scale ClimSim climate dataset, demonstrate significantly improved performance in various calibration tasks.
%Several recent studies have reported negative results when using heteroskedastic neural regression models to model real-world data. In particular, for overparameterized models, the mean and variance networks are powerful enough to either fit every single data point (while shrinking the predicted variances to zero), or to learn a constant prediction with an output variance exactly matching every predicted residual (i.e., explaining the targets as pure noise). This paper studies these difficulties from the perspective of statistical physics. We show that the observed instabilities are not specific to any neural network architecture but are already present in a \emph{field theory} of an overparameterized conditional Gaussian likelihood model. Under light assumptions, we derive a \emph{nonparametric free energy} that can be solved numerically. The resulting solutions show excellent qualitative agreement with empirical model fits on real-world data and, in particular, prove the existence of \emph{phase transitions}, i.e., abrupt, qualitative differences in the behaviors of the regressors upon varying the regularization strengths on the two networks. Our work thus provides a theoretical explanation for the necessity to carefully regularize heteroskedastic regression models. Moreover, the insights from our theory suggest a scheme for optimizing this regularization which is quadratically more efficient than the na\"ive approach.
\end{abstract}
\section{Introduction}
\label{sec:intro}

%Regression and classification problems lie at the heart of deep learning \citep{mathew_deep_2021, ahmad_deep_2019, krizhevsky_imagenet_2012}. 
Homoskedastic regression models assume constant (e.g., Gaussian) output noise and amount to learning a function $f(x)$ that tries to predict the most likely target $y$ for input $x$. In contrast, \emph{heteroskedastic} models assume that the output noise may depend on the input features $x$ as well, and try to learn a conditional distribution $p(y|x)$ with non-uniform variance. The promise of this approach is to assign different importances to training data and to train models that “know where they fail” \citep{skafte_reliable_2019, fortuin_deep_2022}. 

Unfortunately, overparameterized heteroskedastic regression models (e.g., based on deep neural networks) are prone to extreme forms of overfitting \citep{lakshminarayanan_simple_2017, nix_estimating_1994}. On the one hand, the mean model is flexible enough to fit every training datum's target perfectly, while the standard deviation network learns to maximize the likelihood by shrinking the predicted standard deviations to zero. On the other hand, just the tiniest amount of regularization on the mean network will make the model prefer a constant solution. Such a flat prediction results from the standard deviation network's ability to explain all residuals as random noise, thus overfitting the data's empirical prediction residuals. \cref{fig:cartoonphases} shows both types of overfitting. 

While several practical solutions to learning overparameterized heteroskedastic regression models have been proposed \citep{skafte_reliable_2019, stirn_variational_2020, seitzer_pitfalls_2022, stirn_faithful_2023, immer_effective_2023}, no comprehensive theoretical study of the failure of these methods has been offered so far. We conjecture this is because overparameterized models have attracted the most attention only in the past few years, while most classical statistics have focused on under-parameterized (e.g., linear) regression models where such problems cannot occur \citep{huber_behavior_1967, astivia_heteroskedasticity_2019}.

This paper provides a theoretical analysis of the failure of heteroskedastic regression models in the overparameterized limit. To this end, it borrows a tool that abstracts away from any details of the involved neural network architectures: classical field theory from statistical mechanics \citep{landau_statistical_2013,altland_condensed_2010}. Via our field-theoretical description, we can recover the optimized heteroskedastic regressors as solutions to partial differential equations that can be derived from a variational principle. These equations can in turn be solved numerically by optimizing the field theory’s free energy functional. 

Our analysis results in a two-dimensional \emph{phase diagram}, representing the coarse-grained behavior of heteroskedastic noise models for every parameter configuration. Each of the two dimensions corresponds to a different level of regularization of either the mean or standard deviation network. As encountered in many complex physical systems, the field theory unveils \emph{phase transitions}, i.e., sudden and discontinuous changes in certain \emph{observables} (metrics of interest) that characterize the model, such as the smoothness of its mean prediction network, upon small changes in the regularization strengths. In particular, we find a sharp phase boundary between the two types of behavior outlined in the first paragraph, at weak regularization. 

Our contributions are as follows:
\begin{itemize}[wide, labelwidth=!, labelindent=0pt]
\item We provide a unified theoretical description of overparameterized heteroskedastic regression models, which generalizes across different models and architectures, drawing on tools from statistical mechanics and variational calculus.
\item In this framework, we derive a field theory (FT), which can explain the observed types of overfitting in these models and describe \emph{phase transitions} between them. We show qualitative agreement of our FT with experiments, both on simulated and real-world regression tasks.
\item As a practical consequence of our analysis, we dramatically reduce the search space over hyperparameters by eliminating one parameter. This reduces the number of hyperparameters from two to one, empirically resulting in well-calibrated models. We demonstrate the benefits of our approach on a large-scale climate modeling example. 
\end{itemize}

\section{Pitfalls of Overparameterized Heteroskedastic Regression}

\paragraph{Heteroskedastic Regression}
Consider the setting in which we have a collection of independent data points $\mathcal{D}:=\{(x_i, y_i)\}_{i=1}^N$ with covariates $x_i \in \mathcal{X} \subset \R^d$ drawn from some distribution $x_i \sim p(x)$ and response values $y_i \in \mathcal{Y} \equiv \R$ normally distributed with unique mean $\mu_i$ and precision (inverse-variance) $\Lambda_i>0$ (i.e., $y_i \sim \norm(\mu_i, \Lambda_i)$). We assume to be in a \emph{heteroskedastic} setting, in which $\Lambda_i$ need not equal $\Lambda_j$ for $i\neq j$. Finally, we assume \emph{both} the mean and standard deviation of $y_i$ to be explainable via $x_i$:
\begin{align}
y_i\sep x_i \sim \norm(\mu(x_i), \Lambda(x_i)) \text{ for } i = 1,\dots, N
\end{align}
with continuous functions $\mu:\mathcal{X}\rightarrow\R$ and $\Lambda:\mathcal{X}\rightarrow\R_{>0}$. In a modeling setting, learning $\Lambda$ can be seen as directly estimating and quantifying the \emph{aleatoric} (data)  uncertainty.

\paragraph{Overparameterized Neural Networks}
There exist many options for modeling $\mu$ and $\Lambda$. Of particular interest to many is representing each of these functions as neural networks \citep{nix_estimating_1994}---specifically ones that are overparameterized. These models are well-known \emph{universal function approximators}, which makes them great choices for estimating the true functions $\mu$ and $\Lambda$ \citep{hornik_approximation_1991}. 

Let the mean network $\nnmu:\mathcal{X}\rightarrow\R$ and precision network $\nnLambda:\mathcal{X}\rightarrow\R_{>0}$ be arbitrary depth, overparameterized feed-forward neural networks parameterized by $\theta$ and $\phi$ respectively. For a given input $x_i$, these networks collectively represent a corresponding predictive distribution for $y_i$:
\begin{align}
\hat{p}(y_i\sep x_i) := \norm(y_i; \;\! \nnmu(x_i), \nnLambda(x_i)).
\end{align}
\begin{figure*}
\centering
\includegraphics[width=.9\textwidth]{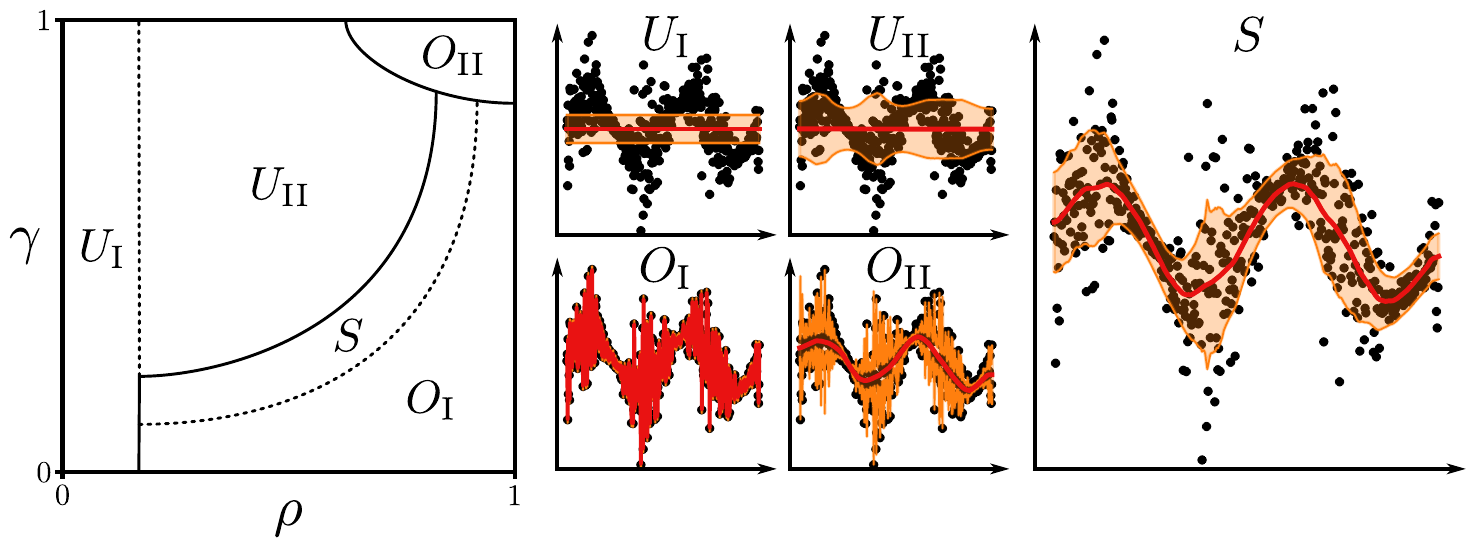}
\caption{Visualization of a typical phase diagram in $\rho-\gamma$ regularization space for a heteroskedastic regression model (left). Solid and dotted lines indicate sharp and smooth transitions in model behavior respectively. Example model mean fits shown in red (with pointwise $\pm$ standard deviation in orange) from the FT for each key phase (middle and right). }
\label{fig:cartoonphases}
\end{figure*}
\paragraph{Pitfalls of MLE}
Our assumed form of data naturally suggests training $\nnmu$ and $\nnLambda$, or rather learning $\theta$ and $\phi$, by minimizing the cross-entropy between the joint data distribution $p:=p(x,y)=p(y\sep x)p(x)$ and the induced predictive distribution $\hat{p}:=\hat{p}(y\sep x) p(x)$. This objective is defined as
\begin{align}
 \mathcal{L}&(\theta, \phi) := H(p, \hat{p})  = -\E_{p}\left[\log \hat{p}(x,y)\right] \\
&= \int_{\mathcal{X}}\!p(x)\!\int_{\mathcal{Y}} \!p(y\sep x) \log \norm(y; \;\! \nnmu(x), \nnLambda(x)) dydx + c, \nonumber
\end{align}
where $c$ is a constant with respect to $\theta$ and $\phi$. This expectation is often approximated using a Monte Carlo (MC) estimate with $N$ samples, yielding the following tractable objective function:
\begin{align}
\mathcal{L}(\theta, \phi) \approx \frac{1}{2N}\sum_{i=1}^N\nnLambda(x_i)\hat{r}(x_i)^2 - \log \nnLambda(x_i), \label{eq:mle}
\end{align}
where $\hat{r}(x_i) = \nnmu(x_i) - y_i$. 
Minimizing this cross-entropy objective function with respect to parameters $\theta$ and $\phi$ using data samples is synonymous with maximum likelihood estimation (MLE). 

Unfortunately, given an infinitely flexible model, this objective function is ill-posed. Our first observation is that, for any non-zero $\nnLambda$, we can find a solution for the parameters $\phi$ in the absence of any regularization since  
the first term in \cref{eq:mle} is minimized when $\nnLambda \rightarrow 0$, while the second term is minimized when $\nnLambda \rightarrow \infty$. However, the interplay between $\phi$ and $\theta$ leads to divergences in the absence of any regularization on $\theta$. Without such regularization, the mean function $\nnmu$ will estimate $y$ perfectly (or rather to arbitrary precision) for at least a single data point $(x_i, y_i)$. Once this happens, the residual for this input $\nnmu(x_i)-y_i$ approaches zero, and the implicit regularization for $\nnLambda$ vanishes, leading $\nnLambda(x_i)$ to diverge to infinity. Intuitively, the model becomes infinitely (over-)confident in its prediction for this data point. 
%
%Consider the fact that there is implicit regularization for $\phi$, as the first term in \cref{eq:mle} is minimized when $\nnLambda \rightarrow 0$, while the second term is minimized when $\nnLambda \rightarrow \infty$. This alone would be fine; however, the mean function $\nnmu$ has no such regularization, so inevitably during training it will estimate $y$ perfectly (or rather to arbitrary precision) for at least a single data point $(x_i, y_i)$. Once this happens, the residual for this input, $y_i - \nnmu(x_i)$, approaches 0 and the regularization for $\nnLambda$ vanishes, at least at this point $x_i$. This leads to the predicted precision diverging towards $\infty$. 
Once training has reached this point, the objective function becomes completely unstable due to effectively containing a term whose limit na\"ively yields $\infty - \infty$.\footnote{Note that this is predicated on the model being flexible enough to allow for large changes in predictions $\nnmu(x)$ and $\nnLambda(x)$ after iteratively updating parameters $\theta$ and $\phi$ while allowing for minimal changes in neighboring predictions (i.e., $\nnmu(x')$ and $\nnLambda(x')$ for some $x'\in\mathcal{X}$ such that $0 < ||x-x'|| < \epsilon$).}

\paragraph{Regularization}
Even though $\nnLambda$ is implicitly regularized in the standard cross-entropy loss as mentioned earlier, we posit that additional regularization on $\nnLambda$, or rather $\phi$, is required to avoid this instability. It can be tempting to think that one must regularize $\theta$ in order to avoid overfitting. And while this is generally true, the objective function $\mathcal{L}$ will still be unstable so long as \emph{at least} one input $x_i$ yields a perfect prediction ($i.e., y_i=\nnmu(x_i)$). This situation is still fairly likely to occur even in the most regularized mean predictors and cannot be avoided, especially if $\{y_i\}$ is zero-centered. 

To prevent this from happening, we can include $L_2$ penalty terms for both $\theta$ and $\phi$ in our loss function:
\begin{align}
\mathcal{L}_{\alpha,\beta}(\theta, \phi) := \mathcal{L}(\theta, \phi) + \alpha||\theta||_2^2 + \beta||\phi||^2_2,
\end{align}
where $\alpha,\beta > 0$ are penalty coefficients. Intuitively, the primary purpose of regularizing $\theta$ is to prevent the mean predictions from overfitting while the goal of regularizing $\phi$ is to provide stability and control complexity in the predicted aleatoric uncertainty. As $\alpha \rightarrow \infty$, the network models a constant mean and, symmetrically, as $\beta \rightarrow \infty$ the network models a constant standard deviation. That is, we effectively arrive at a homoskedastic regime as $\beta \rightarrow \infty$.\footnote{This is under the assumption that either the networks have an unpenalized bias term in the final layer \emph{or} that the data is standardized to have zero mean and unit variance.}

%Note that the solution to this new objective function can be interpreted as the MAP solution, where centered Gaussian priors are placed over the parameters with variances that scale with $\alpha^{-1}$ and $\beta^{-1}$. 
%Alternatively, we can implement this regularizer directly on 

\paragraph{Reparameterized Regularization}
We introduce an alternative parameterization of the regularization coefficients:
\begin{align}
\mathcal{L}_{\rho,\gamma}(\theta, \phi) := \rho\mathcal{L}(\theta, \phi) + \bar{\rho}\left[\gamma||\theta||_2^2 + \bar{\gamma}||\phi||^2_2\right],
\end{align} 
where we restrict $\rho, \gamma \in (0, 1)$ and define $\bar \rho := 1-\rho$ and $\bar \gamma := 1-\gamma$.  
This parameterization is one-to-one with the $\alpha, \beta$ parameterization (with $\alpha = \gamma\bar{\rho}  / \rho$ and $\beta =  \bar{\gamma} \bar{\rho}/ \rho$) and it can be shown that $\nabla_{\theta, \phi} \mathcal{L}_{\rho,\gamma}\propto \nabla_{\theta, \phi} \mathcal{L}_{\alpha, \beta}$, thus minimizing one objective is equivalent to minimizing the other. 
Because $\rho$ and $\gamma$ are bounded we are able to completely cover the space of regularization combinations by searching over $(0, 1)^2$ whereas in the $\alpha, \beta$ parameterization $\alpha, \beta \in \R_{>0}$ are unbounded. 
Now, $\rho$ determines the relative importance between the likelihood and the total regularization imposed on both networks. 
On the other hand, $\gamma$ weights the proportion of total regularization between the mean and precision networks. 
Here, $\rho = 1$ corresponds to the MLE objective while $\rho\to0$ could be interpreted as converging to the mode of the prior in a Bayesian setting. 
Fixing $\gamma=1$ leads to an unregularized precision function while choosing $\gamma=0$ results in an unregularized mean function. 
\begin{table*} %[htp]
\centering
\caption{FT Limiting Cases. We provide intuition for Prop.~\ref{prop:existence} and match the limits to the phase diagram regions in \cref{fig:cartoonphases}.}

\begin{tabular}{ m{2.48cm} m{13.83cm} }
\toprule 
Regularization & Outcome \\
% \multicolumn{1}{c}{Regularization} & 
% \multicolumn{1}{c}{Outcome} \\
\midrule
$\rho \to 1,\gamma \in [0, 1]$  &  
This is equivalent to MLE. Approaching $\rho=1$, we observe overfit mean solutions (see $O_\text{I}$ and $O_\text{II}$ in \cref{fig:cartoonphases}) across all $\gamma$. 
%According to the conditions in \cref{eq:pdes} this holds for any value of $\gamma$.
In theory, at $\rho=1$, there is a contradiction implying no solution should exist. 
\\[0.25cm]
\midrule \\[-0.35cm]
$\rho \to 0, \gamma \in (0, 1)$   &  
The objective is dominated by the regularizers---the data is completely ignored. This corresponds with region $U_\text{I}$. In theory, the optimal solution at $\rho=0$ is for both $\ftmu, \ftp$ to be constant (flat) functions. 
\\[0.25cm]
\midrule \\[-0.35cm]
$\rho \in (0, 1), \gamma \to 1$  &  
All regularization is placed on the mean function, leading to underfit mean. However, the precision is unregularized and the residuals are perfectly matched. This is the top row of the phase diagrams.
\\[0.25cm]
\midrule \\[-0.35cm]
$\rho \in (0, 1), \gamma \to 0$ & 
The mean is unregularized and the precision is strongly regularized. These fits are characterized by severe overfitting and can be found along the bottom row of the phase diagrams. 
\\
\bottomrule
\end{tabular}
\label{tab:cases}
\end{table*}

\paragraph{Qualitative Description of Phases} Model solutions across the space of $\rho$ and $\gamma$ hyperparameters exhibit different traits and behaviors. Similar to physical systems, this can be described as a collection of typical states or \emph{phases} that make up a \emph{phase diagram} as a whole. We find that these phase diagrams are typically consistent in shape across datasets and methodologies. 
\cref{fig:cartoonphases} shows an example phase diagram along with model fits coming from specific $(\rho, \gamma)$ pairings. We argue that there are five primary regions of interest and qualitatively characterize them as follows:
\begin{itemize}[wide, labelwidth=!, labelindent=0pt]
      \setlength\itemsep{0.1em}
    \item Region $U_\text{I}$: Both the mean and precision functions are heavily regularized. The likelihood is so lowly weighted it is as if the model had not seen the data. Regardless of the $\gamma$-value, the likelihood plays a minor role in the objective. The mean and standard deviation functions are constant through zero and 1 (the values they were initialized to). 
    % Lowering $\rho$ all the way to zero would be completely disregarding the data. We never achieve this setting as it was unstable. 
    \item Region $U_\text{II}$: The mean function is still heavily regularized and tends to be flat, underfitting the data as in Region $U_{I}$. 
    %However, the strength of regularization comes from a more even combination of both $\rho, \gamma$. 
    Despite the constant mean function, the precision function can still accommodate the residuals.
    %and the prediction intervals adapt with the data. 
    \item Region $O_\text{I}$: The mean is heavily overfit and the residuals and corresponding standard deviations essentially vanish.
    %in this region. 
    Increasing $\rho \to 1$ yields true MLE fits (right side of the phase diagram). This portion of the phase exists across a wide range of $\gamma$-values. Low values of $\gamma$ restrict the flexibility of the precision function, but due to the overfitting in the mean, the flexibility is not needed to fit the residuals. 
    \item Region $O_\text{II}$: The mean function does not overfit due to regularization, leaving large residuals for the lowly regularized precision function to overfit onto. The predicted standard deviation matches each residual perfectly. 
    \item Region $S$: 
    %The models are well-calibrated---
    The mean and precision functions adapt to the data without overfitting. We conjecture that solutions in this region will provide the best generalization.
\end{itemize}

\section{Theoretic Considerations}
We proceed to develop a theoretical description of the interplay between regularization strengths and resulting model behavior that captures the limiting behavior of heteroskedastic neural networks in the completely overparameterized regime.
This tool allows us to analytically study edge cases of combinations of regularization strengths and find necessary conditions any pair of optimal mean and standard deviation functions must satisfy, agnostic of any specific model architecture. Furthermore, numerical solutions to our \emph{field theory}, explained below, show good qualitative agreement with practical neural network implementations. 

\paragraph{Field Theory}
Having discussed the effects of regularization on a heteroskedastic model on a qualitative level, we ask the following questions: \emph{How much do these effects depend on any particular neural network architecture? Can we describe some of these effects on the function level, i.e., without resorting to neural networks?} To address these questions, we will establish \emph{field theories} from statistical mechanics. 

%While we have a somewhat intuitive grasp over the effects of the regularization hyperparameters on the learned model, directly analyzing this behavior with neural models can be untrustworthy due to requiring potentially unstable optimization techniques and dealing with possible identifiability issues and local minima. 

%As such, we propose to create an investigatory tool that will assist in more directly analyzing these behaviors. 

Field theories are statistical descriptions of random functions, rather than discrete or continuous random variables~\citep{altland_condensed_2010}. A \emph{field} is a function assigning spatial coordinates to scalar values or vectors. Examples of physical fields are electric charge densities, water surfaces, or vector fields such as magnetic fields. Low-energy configurations of fields can display recurring patterns (e.g., waves) or undergo phase transitions (e.g., magnetism) upon varying model parameters.  
Since we can think of a function as an infinite-dimensional vector, field theory requires the usage of \emph{functional analysis} over plain calculus. For example, we frequently ask for the field that minimizes a free energy functional that we obtain by calculating a functional derivative that we set to zero. The advantage to moving to a function-space description is that all details about neural architectures are abstracted away as long as the neural network is sufficiently over-parameterized. 

Firstly, we propose abstracting the neural networks $\nnmu$ and $\nnLambda$ with nonparametric, twice-differentiable functions $\ftmu$ and $\ftp$ respectively. Since these functions no longer depend on parameters, we cannot use $L_2$ penalties. A somewhat comparable substitute is to directly penalize the output ``complexity'' of the models, which can be measured via the \emph{Dirichlet energy}: $\alpha\int p(x)||\nabla \ftmu(x)||_2^2dx$ and $\beta\int p(x) ||\nabla \ftp(x)||_2^2dx$. Note that these specific penalizations induce similar limiting behaviors for resulting solutions (i.e., $\alpha,\beta \rightarrow 0$ implies overfitting while $\rightarrow \infty$ implies constant functions). In the case where $\nnmu$ and $\nnLambda$ are linear models, this gradient penalty is equivalent to an $L_2$ penalty. Further, networks trained with an $L_2$ weight regularization have empirically been found to have lower \emph{geometric complexity}, a variant of \emph{Dirichlet energy} \citep{dherin_why_2022}.

%For time independent problems we minimize a free energy functional: we will refer to it as \emph{nonparametric free energy (NFE)}. The NFE depends on certain parameters, such as the strength of an external magnetic field or a temperature. Upon smoothly varying these parameters, the most likely field configuration can undergo smooth changes (``crossovers'') or abrupt changes (``phase transitions''). As follows, we outline a field-theoretical treatment of the mean and variance parameters of our considered heteroskedastic regression model, where the mean and variance show phase transitions upon varying their regularization strengths.  

Using the assumptions outlined above and the same reparameterization of $(\alpha, \beta)$ to $(\rho, \gamma)$ as with the neural networks, the cross-entropy objective can be interpreted as an action functional of a corresponding two-dimensional FT,
%These two changes culminate in allowing for the interpretation of the penalized cross-entropy as a field theory NFE: 
\begin{align}
\mathcal{L}_{\rho, \gamma}(\hat{\mu},\hat{\Lambda}) &= \int_{\mathcal{X}} p(x)\rho\int_{\mathcal{Y}} p(y\sep x)\log \hat p(y\sep x)dy 
 \\\nonumber &\quad +p(x)\bar{\rho}\left[\gamma||\nabla\hat{\mu}(x)||_2^2  + \bar{\gamma}||\nabla\hat{\Lambda}(x)||_2^2 \right] dx,\label{eq:nfe} 
\end{align}
where $\hat p(y\sep x)=\mathcal{N}(y\sep\hat{\mu}(x), \hat{\Lambda}(x))$. This description assumes a continuous data density $p(x)$, a continuous distribution over regression noise $p(y\sep x)$, and continuous functions $\hat{\mu}(x)$ and $\hat{\Lambda}(x)$ whose behavior we would like to study as a function of varying the regularizers $\rho$ and $\gamma$.

One can view the indexed set $y(\cdot) = \{y(x)\}_{x \in \mathcal{X}}$ as a stochastic process (specifically a white noise process scaled by true precision $\Lambda(x)$ and shifted by true mean $\mu(x)$).
We are interested in the statistical properties of the field theory for any given realization of this stochastic process, $y(x)$, and ideally, we would average over multiple draws. However, for computational convenience, we restrict our attention to a single sample. This simplification is equivalent to considering a specific dataset and similar in spirit to fitting a heteroskedastic model to real data. 
%While this is the full NFE, it is cumbersome to analyze due to the nested integral. As such, we consider the scenario in which the inner integral is approximated using a single MC sample. 
%As this must be done for every $x\in\mathcal{X}$, we aggregate all of these samples into an indexed set $y(\cdot):=\{y(x)\}_{x\in\mathcal{X}}$ where $y(x) \sim p(y\sep x)$. One can view this collection as a stochastic process (specifically a white noise process scaled by true precision $\Lambda(x)$ and shifted by true mean $\mu(x)$). 
This approximation yields the following simplified FT,
\begin{align}
\mathcal{L}_{\rho, \gamma}(\hat{\mu},\hat{\Lambda}) &\approx \int_{\mathcal{X}}p(x) \rho\bigg[\frac{1}{2}\hat{\Lambda}(x)\hat{r}(x)^2  -\frac{1}{2}\log \hat{\Lambda}(x)\bigg] \\\nonumber
&\quad +p(x)\bar{\rho}\left[\gamma||\nabla\hat{\mu}(x)||_2^2 + \bar{\gamma}||\nabla\hat{\Lambda}(x)||_2^2 \right]dx,
\end{align}
where $\hat r(x) := \hat \mu (x) - y(x)$. 
We are primarily interested in solutions $\ftmu^*$ and $\ftp^*$ that minimize the FT $\mathcal{L}_{\rho, \gamma}(\ftmu, \ftp)$ as these are roughly analogous to models $\nnmu$ and $\nnLambda$ that minimize penalized cross-entropy $\mathcal{L}_{\rho, \gamma}(\theta, \phi)$. We can gain insights into these solutions by taking functional derivatives of the FT with respect to $\hat{\mu}$ and $\hat{\Lambda}$ and setting them to zero. 

The result of this procedure are stationary conditions in the form of \emph{partial differential equations}
for %We arrive at the following conditions that must simultaneously be met for any set of solutions 
$\hat \mu^*$ and $\hat \Lambda^*$:
\begin{align}
%&
% \begin{cases}
% \frac{\delta \mathcal{L}_{\rho, \gamma}(\ftmu, \ftp)}{\delta \ftmu} \stackrel{\Delta}{=} 0\\
% \frac{\delta \mathcal{L}_{\rho, \gamma}(\ftmu, \ftp)}{\delta \ftp} \stackrel{\Delta}{=} 0
% \end{cases}\nonumber\\
%&\implies
%\begin{cases}
\ftp^*(x)\hat{r}^*(x) &= 
2\frac{\bar{\rho}}{\rho} \gamma\frac{\Delta\ftmu^*(x)}{p(x)}\nonumber \\
\text{ and } \quad \hat{r}^*(x)^2 &= 
\frac{1}{\ftp^*(x)}+ 4\frac{\bar{\rho}}{\rho}\bar{\gamma}\frac{\Delta\ftp^*(x)}{p(x)},
%\end{cases}
\label{eq:pdes}
\end{align}
where $\hat{r}^*(x) = \ftmu^*(x) - y(x)$ and $\Delta$ is the Laplace operator \citep{engel_density_2011}. Note that these equalities hold true \emph{almost everywhere} (a.e.) with respect to $p(x)$. 

Interestingly, both resulting relationships include a regularization coefficient divided by the density of $x$. Intuitively, while the functions as a whole have a global level of regularization dictated by $\rho$ or $\gamma$, locally this regularization strength is augmented proportional to how likely the input is. This means that areas of high density in $x$ permit more complexity, while less likely regions are constrained to produce simpler outputs. Similarly, since $\Delta \ftmu$ and $\Delta \ftp$ measure the \emph{curvature} of these functions, we see that $\rho$ and $\gamma$ directly impact the complexity of $\ftp$ and $\rho$, as we expect. 

%$\rho$ and $\gamma$ directly impact the complexity of $\ftmu$ and $\ftp$ by scaling the \emph{curvature} of these functions measured by $\Delta \ftmu$ and $\Delta \ftp$. 

\paragraph{Numerically Solving the FT}
Since the stationary conditions in~\cref{{eq:pdes}} are too complex to be solved analytically, we discretize and minimize the FT to arrive at approximate solutions---in theory, we can do so to arbitrary precision. 
Let $\{x_i\}_{i=1}^{N_{D}}$ be a set of $D$ fixed points in $\mathcal{X}$ that we assume are evenly spaced. 
Define $\ftmud, \ftpd, \vec y$ to be $N_{D}$-dimensional vectors where for each $i$, $\ftmud_i := \ftmu(x_i), \ftpd_i := \ftp(x_i), y_i := y(x_i)$. 
We solve for the optimal $\ftmud$ and $\ftpd$ using the discretized approximation to \cref{eq:nfe} via gradient based optimization methods:
\begin{align}
\mathcal{L}_{\rho, \gamma}(\ftmud,\ftpd) &\approx \sum_{i=1}^{N_{D}} \rho\left[\frac{1}{2}\ftpd_i\left(y_i-\ftmud_i\right)^2  -\frac{1}{2}\log \ftpd_i\right] \nonumber\\
&\qquad+\bar{\rho}\left[\gamma||\nabla\ftmud_i||_2^2 + \bar{\gamma}||\nabla\ftpd_i||_2^2 \right],
\end{align}
and numerically approximate the gradients of $\ftmu, \ftp$ by finite-difference methods \citep{fornberg_generation_1988}.

\paragraph{FT Insights}
The pair of constraints in \cref{eq:pdes} allow us to glean useful insights into the resulting regularized solutions by looking at edge cases of specific combinations of $\rho$ and $\gamma$ values.
We summarize the theoretical properties of the limiting cases of $\rho$ and $\gamma$ approaching extreme values in the proposition below and in \cref{tab:cases}. Please refer to \cref{sec:app_proofs} for the proofs of these claims.
\begin{prop}\label{prop:existence}
Under the assumptions of our FT (see above), the following properties hold: (i) in the absence of regularization ($\rho = 1$), there are no solutions to the FT; (ii) in the absence of data ($\rho = 0$), there is no unique solution to the FT; and (iii) there are no valid solutions to the FT if $\rho\in(0, 1)$ and $\gamma=1$ (should there be no mean regularization, then there needs to be at least some regularization for the precision).
\end{prop}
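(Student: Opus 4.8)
The plan is to treat the stationary conditions in \cref{eq:pdes} as necessary conditions that any optimal pair $(\ftmu^*,\ftp^*)$ must satisfy, substitute the relevant limiting values of $\rho$ and $\gamma$, and read off either an outright contradiction (for the non-existence claims) or an undetermined constant (for the non-uniqueness claim); in the one case where the coefficient $\bar\rho/\rho$ degenerates, namely $\rho=0$, I would argue directly from the action functional instead. The unifying mechanism behind (i) and (iii) is that whenever the precision field carries no effective smoothness penalty, the stationary condition in $\ftp$ pins $\ftp^*$ to $1/\hat{r}^{*2}$, a quantity that blows up exactly where the mean matches the data.

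For (i), $\rho=1$ forces $\bar\rho=0$, so both right-hand sides of \cref{eq:pdes} vanish and the pair of conditions collapses to $\ftp^*\hat{r}^*=0$ together with $\hat{r}^{*2}=1/\ftp^*$. Positivity of the precision turns the first equality into $\hat{r}^*=0$ (a.e.), and the second then demands $1/\ftp^*=0$, which no finite positive precision can satisfy; the two conditions are mutually inconsistent, so no stationary solution exists. For (iii), $\gamma=1$ forces $\bar\gamma=0$, which deletes the $\Delta\ftp^*$ term from the second condition in \cref{eq:pdes} and leaves $\ftp^*(x)=1/\hat{r}^*(x)^2$ pointwise. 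Since a valid precision must be finite everywhere, this can hold only if $\hat{r}^*$ never vanishes; the substance of the argument is to show that the residual must in fact vanish at some point, which forces $\ftp^*\to\infty$ there and precludes any valid solution.

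Part (ii) is the non-uniqueness claim, handled most cleanly at the level of the functional because at $\rho=0$ the factor $\bar\rho/\rho$ in \cref{eq:pdes} is ill-defined. Setting $\rho=0$ removes the data term entirely and leaves $\mathcal{L}_{0,\gamma}(\ftmu,\ftp)=\int_{\mathcal{X}} p(x)\bigl[\gamma\|\nabla\ftmu(x)\|_2^2+\bar\gamma\|\nabla\ftp(x)\|_2^2\bigr]\,dx$. For $\gamma\in(0,1)$ both weights are strictly positive, so this Dirichlet energy is nonnegative and equals zero precisely when $\ftmu$ and $\ftp$ are constant on the support of $p$. The constants are otherwise unconstrained — every pair $\ftmu\equiv c_1\in\R$, $\ftp\equiv c_2>0$ attains the minimum value $0$ — so there is a continuum of global minimizers and hence no unique solution; multiplying \cref{eq:pdes} through by $\rho$ reproduces this, forcing $\Delta\ftmu^*=\Delta\ftp^*=0$ with the constant left undetermined.

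The main obstacle is the residual-vanishing step in (iii): because the mean remains penalized (unlike in (i)), I cannot simply set $\hat{r}^*\equiv0$, and must instead argue that a smooth, regularized mean nonetheless meets the noisy target somewhere. The clean route is an intermediate-value argument — $\hat{r}^*=\ftmu^*-y$ is continuous and, the data realization being noisy, takes both signs, hence has a zero — but its rigor hinges on the regularity assumed for the sample path $y(\cdot)$; the alternative is to note that the infimum of the action, though finite, is not attained, since approaching it would require $\ftp^*=\infty$ on the zero set of the residual. I would present the stationarity-based contradiction as the primary proof and treat the unattained-infimum picture as corroborating intuition.
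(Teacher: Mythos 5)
Your treatments of (i) and (ii) are correct and essentially the same as the paper's. For (i), the paper combines $\ftp^*(x)\hat{r}^*(x)=0$ with $(\hat{r}^*(x))^2=1/\ftp^*(x)$ by multiplying the second equation through by $\ftp^*$ to reach $0=1$; your variant (use $\ftp^*>0$ to get $\hat{r}^*\equiv 0$, then $0=1/\ftp^*$ is impossible) is the same argument in a different order. For (ii), the paper argues exactly as you do: at $\rho=0$ the functional reduces to the weighted Dirichlet energies, which are nonnegative and vanish precisely on constant pairs, so the minimizer is a continuum of constants and not unique.

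Part (iii) contains a genuine gap, and it is the one you flagged yourself: producing a point where the residual vanishes. Neither of your proposed repairs works under the FT's own assumptions. The intermediate-value argument requires $\hat{r}^*=\ftmu^*-y$ to be continuous, but in this FT $y(\cdot)$ is a realization of a white-noise process scaled by $\Lambda(x)$ — it is nowhere continuous (not even a classical function), so IVT is unavailable. Worse, for such a realization one expects $y(x)\neq\ftmu^*(x)$ for almost every $x$, so in the idealized continuum theory the condition $\ftp^*=1/(\hat{r}^*)^2$, which only needs to hold a.e.\ with respect to a non-atomic $p(x)$, need not yield any contradiction. The paper concedes precisely this point: it states that the theory ``potentially suggests'' that $\alpha>0,\beta=0$ (i.e., $\gamma=1$) could admit valid solutions, and its actual argument for failure is measure-theoretic and tied to finite data. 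When the objective is computed under the empirical distribution $p(x,y)\propto\sum_{(x_i,y_i)\in\mathcal{D}}\delta(\|x-x_i\|)\delta(y-y_i)$, a single exact match $y(x_i)=\ftmu^*(x_i)$ carries \emph{positive} measure, so one interpolated point already falsifies ``$\hat{r}^*\neq 0$ a.e.,'' and such a match is practically unavoidable; this is what forces $\beta=\bar{\rho}\bar{\gamma}/\rho>0$.

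It is also worth noting that the paper's formal PDE-level computation for (iii) treats a different sub-case from yours: it sets $\alpha=0$, so the $\Delta\ftmu^*$ term disappears and the \emph{first} stationarity condition alone forces $\hat{r}^*\equiv 0$ (exactly the step you correctly observed is unavailable when the mean is penalized), after which $\beta\to 0$ gives $0=1/\ftp^*$. So your reduction — delete the $\Delta\ftp^*$ term, conclude $\ftp^*=1/(\hat{r}^*)^2$, then seek a zero of the residual — is sound as far as it goes and is arguably more faithful to the statement ``$\rho\in(0,1)$, $\gamma=1$'' than the paper's own computation, but it cannot be closed by continuity-of-noise arguments. To complete it you need either the paper's $\alpha=0$ route or its atomic/empirical-measure argument; your ``unattained infimum'' remark is reasonable intuition but is likewise not established for a white-noise target.
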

These limiting cases match our intuitions conveyed earlier that also apply to the neural network context. Furthermore, if we assume that there do exist valid solutions for $\gamma,\rho \in (0, 1)$, it follows that the solutions should either undergo sharp transitions or smooth cross-overs between the behaviors described in the limiting cases when varying the regularization strengths. Section~\ref{sec:experiments} shows that, empirically, these phase diagrams resemble \cref{fig:cartoonphases}. We leave the analytical justification for the types of boundaries and their shapes and placement in the phase diagram for future work.

\begin{figure*}[!htb]
\centering
\includegraphics[width=.8\textwidth]{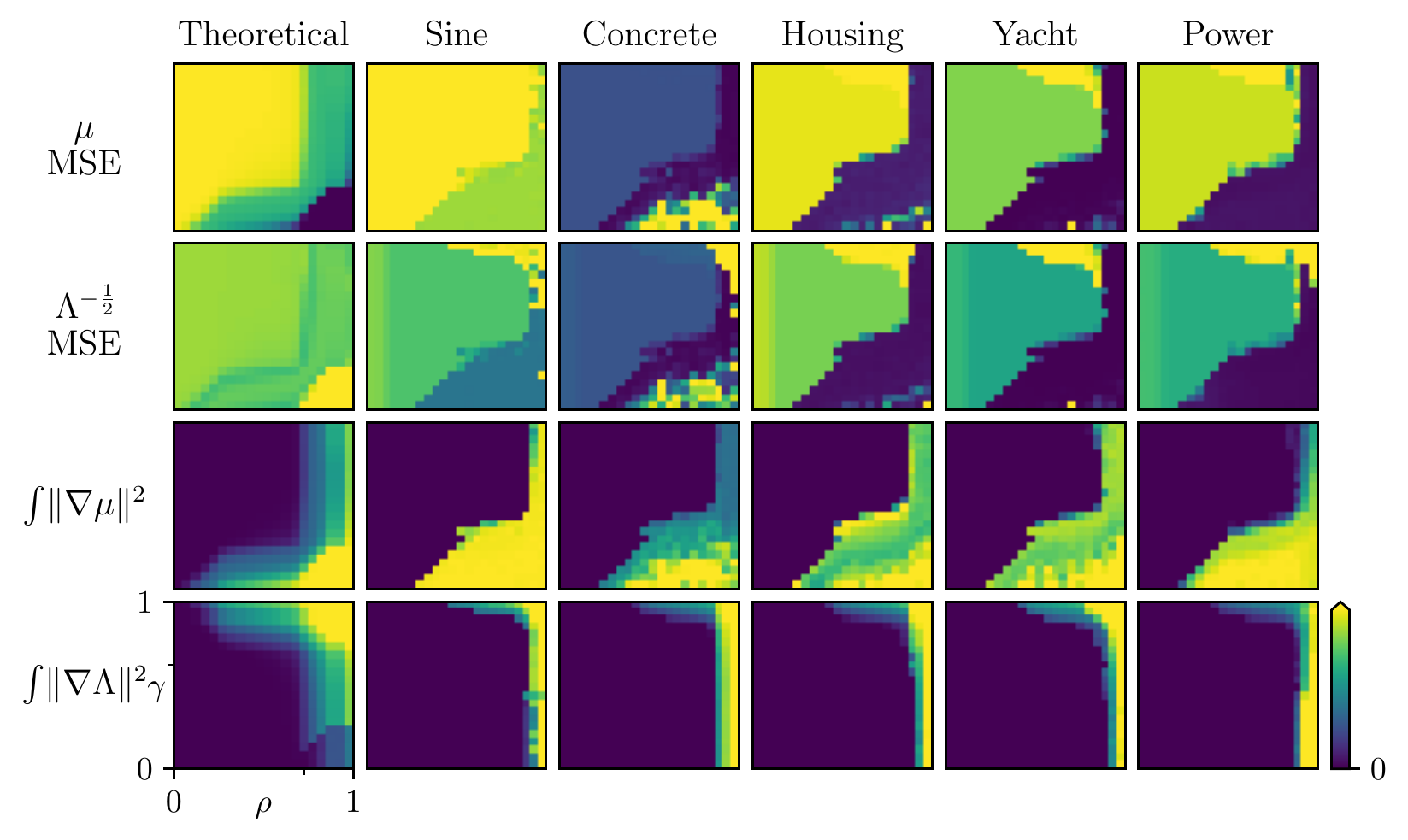}
\caption{
Array plot of metrics (rows) across different data or fitting techniques (columns). Leftmost column: results from our field theory (FT); remaining columns: results from fitting neural networks to data (data sets refer to test splits). Averaged results of six runs are shown. Intermediate ticks mark $\gamma=0.5$ and $\rho=0.5$ on the lower-left plot. Our FT aligns qualitatively well with empirical phase diagrams, with consistent phase transitions across models and datasets.
%Array plot of various metrics (rows) evaluated on different data or fitting techniques (columns). The left most column holds the results from our theoretical FT. The remaining columns show results from fitting neural networks to data. (Data sets refer to test splits.) The values of the six runs are averaged. The intermediate ticks on the lower left plot mark $\gamma=0.5$ and $\rho=0.5$. The shapes of the regions and transitions between regions are of particular interest. Our FT matches the empirical phase diagrams well in this regard, and the phase transitions are preserved across different models and datasets.
}
\label{fig:summary}
\end{figure*}
\section{Experiments}
\label{sec:experiments}
The main focus of our experiments is to visualize the phase transitions in two-dimensional phase diagrams and identify summary statistics ("observables") that display them. 
We establish that these properties are independent of any particular neural network architecture by showing qualitative agreement with the field theory. 
Finally, through this exploratory analysis we discovered a method for finding well-suited combinations of $(\rho, \gamma)$-regularization strengths that reduces a two-dimensional hyperparameter search to one-dimension, allowing for the efficient identification of heteroskedastic model fits that neither over- nor underfit. 

\paragraph{Modeling Choices} 
We chose $\nnmu, \nnLambda$ to be fully-connected networks with three hidden layers of 128 nodes and leaky ReLU activation functions.  
The first half of training was only spent on fitting $\nnmu$, while in the second half of training, both $\nnmu$ and $\nnLambda$ were jointly learned. 
This improves stability, since the precision is a dependent on the mean $\nnmu$, and is similar in spirit to ideas presented in \citep{skafte_reliable_2019}. 
Complete training details can be found in \cref{sec:training}. 

\paragraph{Datasets}
We analyze the effects of regularization on several one-dimensional simulated datasets, standardized versions of the \emph{Concrete} \citep{yeh_i-cheng_concrete_2007}, \emph{Housing} \citep{harrison_hedonic_1978}, \emph{Power} \citep{tufekci_prediction_2014}, and \emph{Yacht} \citep{gerritsma_geometry_1981} regression datasets from the UC Irvine Machine Learning Repository~\citep{kelly_uci_nodate}, and a scalar quantity from the ClimSim dataset \citep{yu_climsim_2023}. 
We fit neural networks to the simulated and real-world data and additionally solve our FT for the simulated data. 
Detailed descriptions of the data are included in \cref{sec:datasets}. 
We present the results for a simulated sinusoidal (\emph{Sine}) dataset as well as the four UCI regression datasets and have results for the other simulated datasets in \cref{sec:nn-sim}.

\subsection{Qualitative Analysis}

Our qualitative analysis aims at understanding architecture-independent aspects of heteroskedastic regression upon varying the regularization strength on the mean and variance functions,  resulting in the observation of phase transitions. 

\paragraph{Metrics of Interest}
We are interested in how well-calibrated the resulting models are as well as how expressive the learned functions are. 
We compute two types of metrics on our experiments to summarize these properties.
Firstly, we consider the mean squared error (MSE). 
We measure this quantity between predicted mean $\nnmu(x_i)$ and target $y_i$, as well as between predicted standard deviation ($\Lambda^{-1/2}(x_i)$) and absolute residual $|\nnmu(x_i)-y_i|$.
If the mean and standard deviation are well-fit to the data, both of these values should be low. 
We opt for $\Lambda^{-\frac{1}{2}}$ MSE due to its similarities to variance calibration \citep{skafte_reliable_2019} and expected normalized calibration error \citep{levi_evaluating_2022}. 
%We use this to measure our uncertainty quantification over other calibration metrics, such as expected calibration error (ECE), as they have been shown to give good scores in degenerate cases \citep{kuleshov_accurate_2018,chung_beyond_2021, levi_evaluating_2022}.\footnote{Namely, for ECE, a model can achieve a low error by learning marginal statistics of the dataset and effectively ignore any trends present between $x$ and $y$, that is a homoskedastic model can achieve a low error even in a heteroskedastic setting. We would prefer our uncertainty quantification metric to help differentiate between $U_\text{I}, U_\text{II},$ and $O_\text{II}$ in our phase diagram.}
Secondly, we evaluate the Dirichlet energy for the FT and its discrete analogue, geometric complexity \citep{dherin_why_2022}, for neural networks of the learned $\nnmu, \nnLambda, \ftmud, \ftpd$. As previously mentioned, the Dirichlet energy of a function $f$ is defined as $\int_{\mathcal{X}} p(x) ||\nabla f(x)||_2^2 \, dx$. Meanwhile, geometric complexity is $N^{-1} \sum_{i=1}^N ||\nabla f(x)||_2^2$. 
Each quantity captures how expressive a learned function is, with more expressive functions yielding a higher value and is analogous (or equivalent) to the quantity we penalize in the FT setting.
% In particular we note that calibration metrics (such as ECE) are should not be evaluated in isolation as simple models that have strong \emph{on average} performance can achieve good scores while not fitting the data as a whole \citep{kuleshov_accurate_2018,chung_beyond_2021, levi_evaluating_2022}.

\paragraph{Plot Interpretation}
We present summaries of the fitted models in grids with $\rho$ on the $x$-axis and $\gamma$ on the $y$-axis in \cref{fig:summary}. 
The far right column ($\gamma=1$) corresponds to MLE solutions. 
The main focus is on qualitative traits of fits under different levels of regularization and how they behave in a relative sense, rather than a focus on absolute values. 
\cref{fig:diag_slice_a} show the summary statistics along the slice where $\rho = 1-\gamma$. 
Zero on these plots corresponds to the upper left corner while one corresponds to the lower right corner. 

\textbf{Observation 1:}
\emph{Our metrics show sharp phase transitions upon varying $\rho, \gamma$, as in a physical system.}
\cref{fig:summary} and \cref{fig:diag_slice_a} show a sharp transition, both leading to worsening and improving performance when moving along the minor diagonal. 
In totality, across all metrics, the five regions are apparent.
But not all of the regions in \cref{fig:cartoonphases} appear in the heatmaps of each metric. 
For example, region $O_\text{II}$ does not always appear in the metrics related to the mean.  
When using neural networks to approximate $\mu$ and $\Lambda$, there are sharper boundaries between phases than in the FT's numerical solutions. 
The boundary between $U_\text{II}$ and $O_\text{I}$ is sharply observed in the plots of $\int ||\nabla \mu(x)||_2^2 \, dx$. 
However, in terms of $\mu$ MSE, a smoother transition (i.e., region $S$) is visible.

\textbf{Observation 2:}
\emph{The FT insights and observed phases are consistent with the numerically solved FT and the results from fitting neural networks. Thus, our results are not tied to a specific architecture or dataset.}

In alignment with our theoretical insights, phases $U_\text{I}$ and $O_\text{I}$ exhibit consistent behavior across $\gamma$-values (vertical slices in \cref{fig:summary}). 
In the right-hand columns $(\rho \to 1)$, there is near-perfect matching of the data by the mean function and this is also visible in the lower rows $(\gamma \to 0)$. 
Within the metrics we assess, the shapes of the regions vary with regularization strength in a similar fashion on all datasets. 
In the plots of $\int ||\nabla \Lambda(x)||_2^2 \, dx$, the region where $\Lambda$ is flatter covers a larger area compared to the phase diagram showing $\int ||\nabla \mu(x)||_2^2 \, dx$. 
That is, for the same proportion of regularization as the mean, the precision remains flatter. 

\subsection{Quantitative Analysis}
Our quantitative analysis aims to demonstrate the practical implications of our qualitative investigations that result in better calibration properties.

\textbf{Observation 3:}
\emph{We can search along $\rho = 1-\gamma$ to find a well-calibrated $(\rho, \gamma)$-pair from region $S$.}

Our FT indicates that a slice across the minor diagonal of the phase diagram should always cross the $S$ region (see \cref{fig:cartoonphases}).
\cref{fig:diag_slice_a} show that by searching along this diagonal, we indeed find a combination of regularization strengths where both $\nnmu$ and $\nnLambda$ generalize well to held-out test data. 
This implies that there is no need to search all of the two-dimensional space, but only a single slice which reduces the 
% number of $(\rho, \gamma)$ configurations to consider from $\mathcal{O}(N^2)$ to $\mathcal{O}(N)$.
number of models to fit from $O(N^2)$ to $O(N)$, where $N$ is the number of $\rho$ and $\gamma$ values that are tested.  

\cref{fig:diag_slice_a} shows that along the minor diagonal the performance is initially poor, improves, and then drops off again. 
These shifts from strong to weak performance are sharp. 
The regularization pairings that result in optimal performance with respect to $\mu$- and $\Lambda^{-1/2}$-MSE are near each other along this diagonal for the real-world test data. 
As the theory predicts, the performance becomes highly variable as we approach the MLE solutions and the FT fails to converge in this region. 
In practice, we propose searching along this line to find the $(\rho, \gamma)$-combinations that minimize the $\mu$- and $\Lambda^{-\frac{1}{2}}$-MSEs and averaging the regularization strengths to fit a model.  
We compare models chosen by our diagonal line search to two heteroskedastic modeling baselines in \cref{sec:baseline_comp} on the synthetic and UCI datasets as well as a scalar quantity from the ClimSim dataset \citep{yu_climsim_2023}. We present a subset of the results below in Table \ref{tab:baseline_sub}. In most cases the model chosen via the diagonal line search was competitive or better than the baselines.

% \begin{table}[htp]
% \centering

% \caption{Comparison of our method against the $\beta$-NLL method from \cite{seitzer_pitfalls_2022}. We report the average and standard deviations of expected calibration error (ECE) on test data. Lowest mean value between the two is bolded.}
% \begin{tabular}{ l *{2}{c} }
% \toprule
% {\textbf{Dataset}}&
% {\textbf{Ours}} & {$\beta$-NLL}  \\
% \midrule
% Sine   & 0.25 ± 0.00 & \textbf{0.21} ± 0.03 \\
% \midrule
% Concrete  & \textbf{0.25} ± 0.01 & 0.26 ± 0.00                 \\
% Housing  & \textbf{0.07} ± 0.00 & 0.26 ± 0.01   \\
% Power  & \textbf{0.22} ± 0.01 & 0.24 ± 0.00              \\
% Yacht    & 0.30 ± 0.04 & \textbf{0.29} ± 0.02          \\
% \midrule
% Solar Flux & \textbf{0.15} ± 0.00 & 0.30 ± 0.00          \\
% \bottomrule
% \end{tabular}
% \label{tab:ece_tab}
% \end{table}

\begin{table}[htp!]
\centering
\small
\caption{Comparison of our method with $\beta$-NLL \citep{seitzer_pitfalls_2022}. We report the average and standard deviations of $\mu$- and $\Lambda^{-\frac{1}{2}}$-MSE across six runs on test data. Lowest mean value of the two is bolded.}

\begin{tabular}{ l *{2}{c} }
\toprule
{\textbf{Dataset}}&
{\textbf{Ours}} &
{$\beta$-NLL} \\
%{\phantom{..........}Metric} & {} &
%{ \cite{seitzer_pitfalls_2022}} \\
\midrule
Sine & & \\
\phantom{........} $\mu$ MSE   & \textbf{0.80} ± 0.00 & 4.41 ± 6.90 \\
\phantom{........} $\Lambda^{-\frac{1}{2}}$MSE   & \textbf{0.80} ± 0.00 & 4.35 ± 6.89 \\
\midrule
Concrete & & \\
\phantom{........} $\mu$ MSE  & \textbf{0.11} ± 0.02 & 14.99 ± 28.75       \\
\phantom{........} $\Lambda^{-\frac{1}{2}}$MSE  & \textbf{0.30} ± 0.51 & $1.3 \times 10^5$ ± $2.7 \times 10^5$ \\
Housing & & \\
\phantom{........} $\mu$ MSE   & \textbf{1.22} ± 0.00 & 8.5$\times 10^2$ ± $2.0\times 10^4$      \\
\phantom{........} $\Lambda^{-\frac{1}{2}}$MSE   & \textbf{0.76} ± 0.00 & 8.5$\times 10^2$ ± $2.0\times 10^4$        \\
Power & & \\
\phantom{........} $\mu$ MSE    & 0.04± 0.01 & \textbf{0.03} ± 0.01      \\
\phantom{........} $\Lambda^{-\frac{1}{2}}$MSE    & \textbf{0.03} ± 0.01 & 0.04 ± 0.01  \\
Yacht & & \\
\phantom{........} $\mu$ MSE    & \textbf{0.01} ± 0.01 & 3.4$\times 10^2$ ± 1.9$\times 10^2$  \\
\phantom{........} $\Lambda^{-\frac{1}{2}}$MSE    & \textbf{0.01} ± 0.01 & 3.4$\times 10^2$ ± 1.9$\times 10^2$   \\
\midrule
Solar Flux & & \\
\phantom{........} $\mu$ MSE  & \textbf{0.29} ± 0.00 & 0.49 ± 0.16     \\
\phantom{........} $\Lambda^{-\frac{1}{2}}$MSE  & \textbf{0.12} ± 0.00 & 0.29 ± 0.01 \hspace{0.85em}\\
\bottomrule
\end{tabular}
\label{tab:baseline_sub}
\end{table}

\begin{figure*}[!htb]
\centering
\begin{subfigure}[b]{0.87\textwidth}
   \includegraphics[width=.9\linewidth]{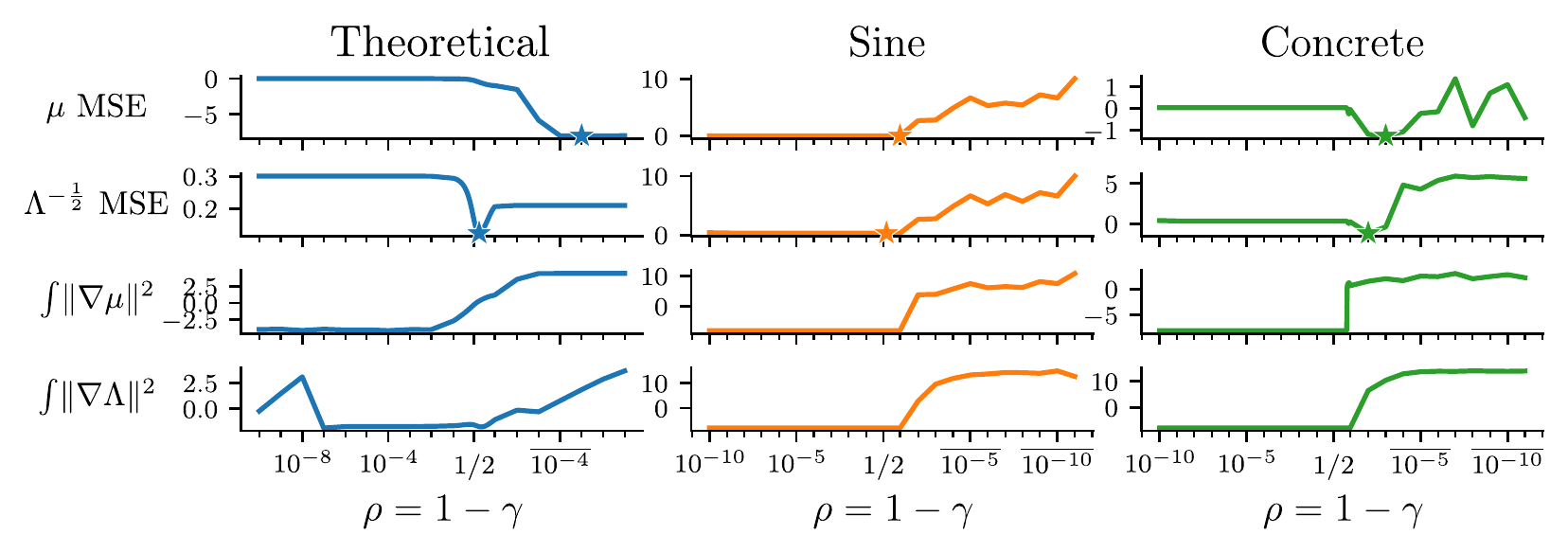}
\end{subfigure}
\begin{subfigure}[b]{0.87\textwidth}
   \includegraphics[width=.9\linewidth]{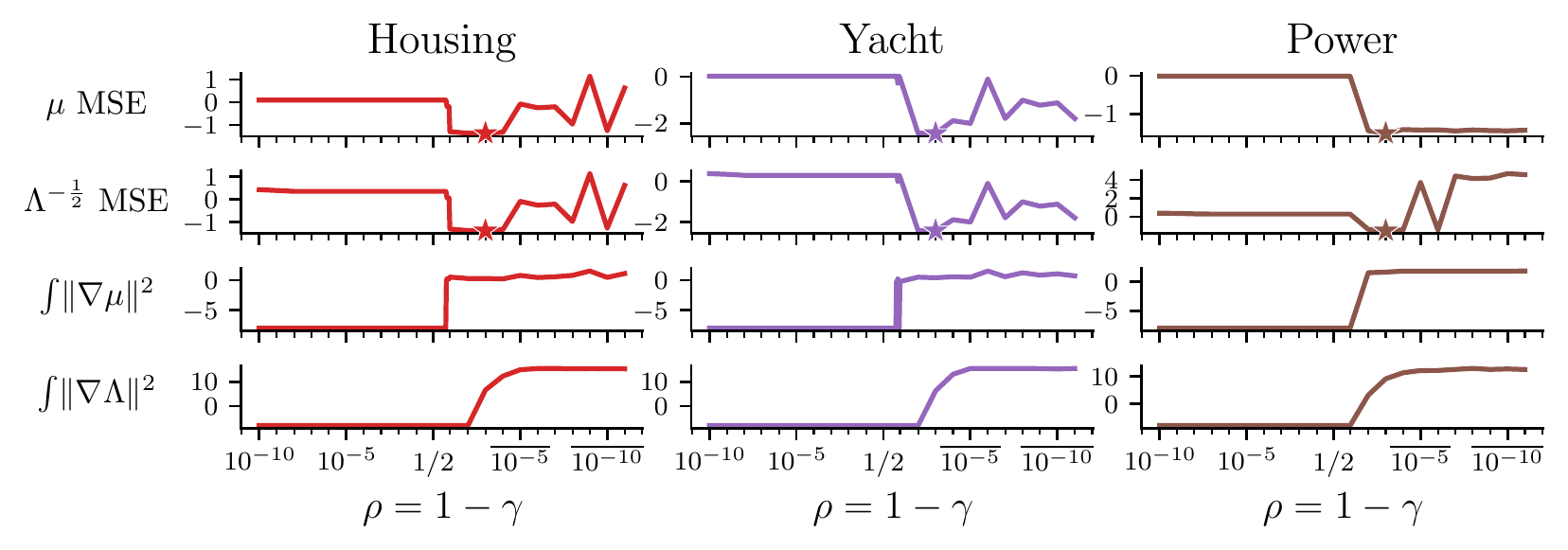}
\end{subfigure}
\caption{Test metrics for six runs achieved along the $\rho=1-\gamma$ minor diagonal. 
Stars indicate minimum MSE values. 
All metrics are reported on a $\log_{10}$ scale. 
$\rho$ values are shown on a logit scale with $\overline{10^k}:=1-10^k$. 
From left to right, note the sharp decrease in test metric values, especially in the solutions to neural network models followed by a typical smoother increase. 
This empirically supports the existence of the well-calibrated $S$ phase shown in \cref{fig:cartoonphases}.
% and allows for hyperparameter optimization in $O(N)$ instead of $O(N^2)$.
}
\label{fig:diag_slice_a}
\end{figure*}

\section{Related Work}

Uncertainty can be divided into epistemic (model) and aleatoric (data) uncertainty \citep{hullermeier_aleatoric_2021}, the latter of which can be further divided into homoskedastic (constant over input space) and heteroskedastic (varies over input space).
Handling heteroskedastic noise historically has been and continues to be an active area of research in statistics \citep{huber_behavior_1967, eubank_detecting_1993, le_heteroscedastic_2005, uyanto_monte_2022} and machine learning \citep{abdar_review_2021}, but is less common in deep learning \citep{kendall_what_2017, fortuin_deep_2022}, probably due to pathologies that we analyze in this work. 
Heteroskedastic noise modeling can be interpreted as reweighting the importance of datapoints during training time, which \citet{wang_robust_2017} show to be beneficial in the presence of corrupted data and \citet{khosla_neural_2022} in active learning.

To the best of our knowledge, \cite{nix_estimating_1994} were the first to model a mean and standard deviation function with neural networks and Gaussian likelihood.
\cite{skafte_reliable_2019} suggest changing the optimization loop to train the mean and standard deviation networks separately, treating the standard deviations variationally and integrating them out as \cite{takahashi_student-t_2018} does in the context of VAEs, accounting for the location of the data when sampling, and setting a predefined global variance when extrapolating. 
\cite{stirn_variational_2020} also perform amortized VI on the standard deviations and evaluate their model from the perspective of posterior predictive checks.
\cite{seitzer_pitfalls_2022} provide an in-depth analysis of the shortcomings of MLE estimation in this setting and adjust the gradients during training to avoid pathological behavior. 
\cite{stirn_faithful_2023} extend the idea of splitting mean and standard deviation network training in a setting where there are several shared layers to learn a representation before emitting mean and standard deviation. 
Finally, \cite{immer_effective_2023} take a Bayesian approach to the problem and use Laplace approximation on the marginal likelihood to perform empirical Bayes. This allows for regularization to be applied through the prior and for separation of model and data uncertainty. 
%The goal is to maintain mean performance (as in a homoskedastic model) while adding in a secondary standard deviation network. 
While these works propose practical solutions, in contrast to our work, none of them study the theoretical underpinnings of these pathologies, let alone in a model- or data-agnostic way.

\section{Conclusion}
We have used field-theoretical tools from statistical physics to derive a nonparametric free energy, which allowed us to produce analytical insights into the pathologies of deep heteroskedastic regression. These insights generalize across models and datasets and provide a theoretical explanation for the need for carefully tuned regularization in these models, due to the presence of sharp phase transitions between pathological solutions.

We have also presented a numerical approximation to this theory, which empirically agrees with neural network solutions to synthetic and real-world data.
%Using insights from the theory, we have shown that we can tune the required regularization for these models to arrive at well-calibrated models more efficiently than would na\"ively be the case.
Insights from the theory have informed a method to tune the regularization to arrive at well-calibrated models more efficiently than would na\"ively be the case.
Finally, we hope that this work will open an avenue of research for using ideas from theoretical physics to study the behavior of overparameterized models, thus furthering our understanding of otherwise typically unintelligible large models used in AI systems.

\paragraph{Limitations}
Our FT and subsequent analysis are restricted to regression problems. 
From an uncertainty quantification perspective, the models we discuss only account for the aleatoric uncertainty.
Though our use of regularizers has a Bayesian interpretation, we are not performing Bayesian inference and do not account for epistemic uncertainty. Solving the FT under a fully Bayesian framework would result in stochastic PDE solutions. We leave analysis of this setting to future work.
Additionally, our suggestion to search $\rho = 1-\gamma$ to find good hyperparameter settings appears to be valid, but requires fitting many models.
Ideally, one might hope to use the field theory directly to find optimal regularization settings for real-world models, but our numerical approach is currently not accurate enough for this use case.

\paragraph{Acknowledgements}
We acknowledge insightful discussions with Lawrence Saul. 
Eliot Wong-Toi acknowledges support from the Hasso Plattner Research School at UC Irvine. Alex Boyd acknowledges support from the National Science Foundation Graduate Research Fellowship grant DGE-1839285. Vincent Fortuin was supported by a Branco Weiss Fellowship. Stephan Mandt acknowledges support by the National Science Foundation (NSF) under the NSF CAREER Award 2047418; NSF Grants 2003237 and 2007719, the Department of Energy, Office of Science under grant DE-SC0022331, as well as gifts from Intel, Disney, and Qualcomm.

\begin{comment}
\paragraph{Broader Impact}
This work primarily serves at bettering our understanding of heteroskedastic regression problems, which falls under the greater umbrella of uncertainty quantification in machine learning models. Improvements in this space correspond to models that are more trustworthy and can be more likely to be used in impactful, decision-making settings. Example settings include climate modeling, medical diagnosis prediction, and real-time navigation for automated transportation. These settings, among others, all can benefit from having a model be aware of what it does and does not know when making decisions. Our work demonstrates that in many regression settings there exists a sweet-spot (phase $S$) in terms of regularization strength in which a model can be reliably trained with trustworthy uncertainty prediction.

\end{comment}

\bibliography{references.bib}

\begin{thebibliography}{34}
\providecommand{\natexlab}[1]{#1}
\providecommand{\url}[1]{\texttt{#1}}
\expandafter\ifx\csname urlstyle\endcsname\relax
  \providecommand{\doi}[1]{doi: #1}\else
  \providecommand{\doi}{doi: \begingroup \urlstyle{rm}\Url}\fi

\bibitem[Abdar et~al.(2021)Abdar, Pourpanah, Hussain, Rezazadegan, Liu, Ghavamzadeh, Fieguth, Cao, Khosravi, Acharya, Makarenkov, and Nahavandi]{abdar_review_2021}
Moloud Abdar, Farhad Pourpanah, Sadiq Hussain, Dana Rezazadegan, Li~Liu, Mohammad Ghavamzadeh, Paul Fieguth, Xiaochun Cao, Abbas Khosravi, U.~Rajendra Acharya, Vladimir Makarenkov, and Saeid Nahavandi.
\newblock A review of uncertainty quantification in deep learning: {Techniques}, applications and challenges.
\newblock \emph{Information Fusion}, 76:\penalty0 243--297, December 2021.
\newblock ISSN 1566-2535.
\newblock \doi{10.1016/j.inffus.2021.05.008}.

\bibitem[Altland and Simons(2010)]{altland_condensed_2010}
Alexander Altland and Ben~D. Simons.
\newblock \emph{Condensed {Matter} {Field} {Theory}}.
\newblock Cambridge University Press, Cambridge, 2 edition, 2010.
\newblock ISBN 978-0-521-76975-4.
\newblock \doi{10.1017/CBO9780511789984}.

\bibitem[Astivia and Zumbo(2019)]{astivia_heteroskedasticity_2019}
Oscar Astivia and Bruno Zumbo.
\newblock Heteroskedasticity in {Multiple} {Regression} {Analysis}: {What} it is, {How} to {Detect} it and {How} to {Solve} it with {Applications} in {R} and {SPSS}.
\newblock \emph{Practical Assessment, Research, and Evaluation}, 24\penalty0 (1), November 2019.
\newblock ISSN 1531-7714.
\newblock \doi{https://doi.org/10.7275/q5xr-fr95}.

\bibitem[Chung and Neiswanger(2021)]{chung_beyond_2021}
Youngseog Chung and Willie Neiswanger.
\newblock Beyond {Pinball} {Loss}: {Quantile} {Methods} for {Calibrated} {Uncertainty} {Quantiﬁcation}.
\newblock 2021.

\bibitem[Dherin et~al.(2022)Dherin, Munn, Rosca, and Barrett]{dherin_why_2022}
Benoit Dherin, Michael Munn, Mihaela Rosca, and David G~T Barrett.
\newblock Why neural networks ﬁnd simple solutions: the many regularizers of geometric complexity.
\newblock 2022.

\bibitem[Engel and Dreizler(2011)]{engel_density_2011}
Eberhard Engel and Reiner~M. Dreizler.
\newblock \emph{Density {Functional} {Theory}: {An} {Advanced} {Course}}.
\newblock Theoretical and {Mathematical} {Physics}. Springer, Berlin, Heidelberg, 2011.
\newblock ISBN 978-3-642-14089-1 978-3-642-14090-7.
\newblock \doi{10.1007/978-3-642-14090-7}.

\bibitem[Eubank and Thomas(1993)]{eubank_detecting_1993}
R.~L. Eubank and Will Thomas.
\newblock Detecting {Heteroscedasticity} in {Nonparametric} {Regression}.
\newblock \emph{Journal of the Royal Statistical Society: Series B (Methodological)}, 55\penalty0 (1):\penalty0 145--155, 1993.
\newblock ISSN 2517-6161.
\newblock \doi{10.1111/j.2517-6161.1993.tb01474.x}.
\newblock \_eprint: https://onlinelibrary.wiley.com/doi/pdf/10.1111/j.2517-6161.1993.tb01474.x.

\bibitem[Fornberg(1988)]{fornberg_generation_1988}
Bengt Fornberg.
\newblock Generation of {Finite} {Difference} {Formulas} on {Arbitrarily} {Spaced} {Grids}.
\newblock \emph{Mathematics of Computation}, 51:\penalty0 699--706, October 1988.

\bibitem[Fortuin et~al.(2022)Fortuin, Collier, Wenzel, Allingham, Liu, Tran, Lakshminarayanan, Berent, Jenatton, and Kokiopoulou]{fortuin_deep_2022}
Vincent Fortuin, Mark Collier, Florian Wenzel, James Allingham, Jeremiah Liu, Dustin Tran, Balaji Lakshminarayanan, Jesse Berent, Rodolphe Jenatton, and Eﬀrosyni Kokiopoulou.
\newblock Deep {Classiﬁers} with {Label} {Noise} {Modeling} and {Distance} {Awareness}.
\newblock \emph{Transactions on Machine Learning Research}, 2022.

\bibitem[Gerritsma(1981)]{gerritsma_geometry_1981}
J.~Gerritsma.
\newblock Geometry, resistance and stability of the {Delft} {Systematic} {Yacht} hull series.
\newblock \emph{TU Delft, Faculty of Marine Technology, Ship Hydromechanics Laboratory, Report No. 520-P, Published in: International Shipbuilding Progress, ISP, Delft, The Netherlands, Volume 28, No. 328, also 7th HISWA Symposium, Amsterdam, The Netherlands}, 1981.

\bibitem[Harrison and Rubinfeld(1978)]{harrison_hedonic_1978}
David Harrison and Daniel~L Rubinfeld.
\newblock Hedonic housing prices and the demand for clean air.
\newblock \emph{Journal of Environmental Economics and Management}, 5\penalty0 (1):\penalty0 81--102, March 1978.
\newblock ISSN 0095-0696.
\newblock \doi{10.1016/0095-0696(78)90006-2}.

\bibitem[Hornik(1991)]{hornik_approximation_1991}
Kurt Hornik.
\newblock Approximation capabilities of multilayer feedforward networks.
\newblock \emph{Neural Networks}, 4\penalty0 (2):\penalty0 251--257, January 1991.
\newblock ISSN 0893-6080.
\newblock \doi{10.1016/0893-6080(91)90009-T}.

\bibitem[Huber(1967)]{huber_behavior_1967}
Peter~J. Huber.
\newblock The behavior of maximum likelihood estimates under nonstandard conditions.
\newblock \emph{Proceedings of the Fifth Berkeley Symposium on Mathematical Statistics and Probability, Volume 1: Statistics}, 5.1:\penalty0 221--234, January 1967.
\newblock Publisher: University of California Press.

\bibitem[Hüllermeier and Waegeman(2021)]{hullermeier_aleatoric_2021}
Eyke Hüllermeier and Willem Waegeman.
\newblock Aleatoric and epistemic uncertainty in machine learning: an introduction to concepts and methods.
\newblock \emph{Machine Learning}, 110\penalty0 (3):\penalty0 457--506, March 2021.
\newblock ISSN 1573-0565.
\newblock \doi{10.1007/s10994-021-05946-3}.

\bibitem[Immer et~al.(2023)Immer, Palumbo, Marx, and Vogt]{immer_effective_2023}
Alexander Immer, Emanuele Palumbo, Alexander Marx, and Julia~E Vogt.
\newblock Effective {Bayesian} {Heteroscedastic} {Regression} with {Deep} {Neural} {Networks}.
\newblock 2023.

\bibitem[Kelly et~al.()Kelly, Longjohn, and Nottingham]{kelly_uci_nodate}
Markelle Kelly, Rachel Longjohn, and Kolby Nottingham.
\newblock The {UCI} {Machine} {Learning} {Repository}.
\newblock URL \url{https://archive.ics.uci.edu}.

\bibitem[Kendall and Gal(2017)]{kendall_what_2017}
Alex Kendall and Yarin Gal.
\newblock What {Uncertainties} {Do} {We} {Need} in {Bayesian} {Deep} {Learning} for {Computer} {Vision}?
\newblock In \emph{Advances in {Neural} {Information} {Processing} {Systems}}, volume~30. Curran Associates, Inc., 2017.

\bibitem[Khosla et~al.(2022)Khosla, Whye, Ash, Zhang, Kawaguchi, and Lamb]{khosla_neural_2022}
Savya Khosla, Chew~Kin Whye, Jordan~T. Ash, Cyril Zhang, Kenji Kawaguchi, and Alex Lamb.
\newblock Neural {Active} {Learning} on {Heteroskedastic} {Distributions}, November 2022.
\newblock arXiv:2211.00928 [cs].

\bibitem[Kuleshov et~al.(2018)Kuleshov, Fenner, and Ermon]{kuleshov_accurate_2018}
Volodymyr Kuleshov, Nathan Fenner, and Stefano Ermon.
\newblock Accurate {Uncertainties} for {Deep} {Learning} {Using} {Calibrated} {Regression}.
\newblock 2018.

\bibitem[Lakshminarayanan et~al.(2017)Lakshminarayanan, Pritzel, and Blundell]{lakshminarayanan_simple_2017}
Balaji Lakshminarayanan, Alexander Pritzel, and Charles Blundell.
\newblock Simple and {Scalable} {Predictive} {Uncertainty} {Estimation} using {Deep} {Ensembles}.
\newblock In \emph{Neural {Information} {Processing} {Systems}}, 2017.

\bibitem[Landau and Lifshitz(2013)]{landau_statistical_2013}
L.~D. Landau and E.~M. Lifshitz.
\newblock \emph{Statistical {Physics}: {Volume} 5}.
\newblock Elsevier, October 2013.
\newblock ISBN 978-0-08-057046-4.

\bibitem[Le et~al.(2005)Le, Smola, and Canu]{le_heteroscedastic_2005}
Quoc~V. Le, Alex~J. Smola, and Stéphane Canu.
\newblock Heteroscedastic {Gaussian} process regression.
\newblock In \emph{Proceedings of the 22nd international conference on {Machine} learning - {ICML} '05}, pages 489--496, Bonn, Germany, 2005. ACM Press.
\newblock ISBN 978-1-59593-180-1.
\newblock \doi{10.1145/1102351.1102413}.

\bibitem[Levi et~al.(2022)Levi, Gispan, Giladi, and Fetaya]{levi_evaluating_2022}
Dan Levi, Liran Gispan, Niv Giladi, and Ethan Fetaya.
\newblock Evaluating and {Calibrating} {Uncertainty} {Prediction} in {Regression} {Tasks}.
\newblock \emph{Sensors (Basel, Switzerland)}, 22\penalty0 (15):\penalty0 5540, July 2022.
\newblock ISSN 1424-8220.
\newblock \doi{10.3390/s22155540}.
\newblock URL \url{https://www.ncbi.nlm.nih.gov/pmc/articles/PMC9330317/}.

\bibitem[Nix and Weigend(1994)]{nix_estimating_1994}
D.A. Nix and A.S. Weigend.
\newblock Estimating the mean and variance of the target probability distribution.
\newblock In \emph{Proceedings of 1994 {IEEE} {International} {Conference} on {Neural} {Networks} ({ICNN}'94)}, volume~1, pages 55--60 vol.1, June 1994.
\newblock \doi{10.1109/ICNN.1994.374138}.

\bibitem[Seitzer et~al.(2022)Seitzer, Tavakoli, Antic, and Martius]{seitzer_pitfalls_2022}
Maximilian Seitzer, Arash Tavakoli, Dimitrije Antic, and Georg Martius.
\newblock {ON} {THE} {PITFALLS} {OF} {HETEROSCEDASTIC} {UNCERTAINTY} {ESTIMATION} {WITH} {PROBABILISTIC} {NEURAL} {NETWORKS}.
\newblock 2022.

\bibitem[Skafte et~al.(2019)Skafte, Jørgensen, and Hauberg]{skafte_reliable_2019}
Nicki Skafte, Martin Jørgensen, and Søren Hauberg.
\newblock Reliable training and estimation of variance networks.
\newblock 2019.

\bibitem[Stirn and Knowles(2020)]{stirn_variational_2020}
Andrew Stirn and David~A. Knowles.
\newblock Variational {Variance}: {Simple}, {Reliable}, {Calibrated} {Heteroscedastic} {Noise} {Variance} {Parameterization}, October 2020.
\newblock arXiv:2006.04910 [cs, stat].

\bibitem[Stirn et~al.(2023)Stirn, Wessels, Schertzer, Pereira, Sanjana, and Knowles]{stirn_faithful_2023}
Andrew Stirn, Hans-Hermann Wessels, Megan Schertzer, Laura Pereira, Neville~E Sanjana, and David~A Knowles.
\newblock Faithful {Heteroscedastic} {Regression} with {Neural} {Networks}.
\newblock 2023.

\bibitem[Takahashi et~al.(2018)Takahashi, Iwata, Yamanaka, Yamada, and Yagi]{takahashi_student-t_2018}
Hiroshi Takahashi, Tomoharu Iwata, Yuki Yamanaka, Masanori Yamada, and Satoshi Yagi.
\newblock Student-t {Variational} {Autoencoder} for {Robust} {Density} {Estimation}.
\newblock In \emph{Proceedings of the {Twenty}-{Seventh} {International} {Joint} {Conference} on {Artificial} {Intelligence}}, pages 2696--2702, Stockholm, Sweden, July 2018. International Joint Conferences on Artificial Intelligence Organization.
\newblock ISBN 978-0-9992411-2-7.
\newblock \doi{10.24963/ijcai.2018/374}.

\bibitem[Tüfekci(2014)]{tufekci_prediction_2014}
Pınar Tüfekci.
\newblock Prediction of full load electrical power output of a base load operated combined cycle power plant using machine learning methods.
\newblock \emph{International Journal of Electrical Power \& Energy Systems}, 60:\penalty0 126--140, September 2014.
\newblock ISSN 0142-0615.
\newblock \doi{10.1016/j.ijepes.2014.02.027}.

\bibitem[Uyanto(2022)]{uyanto_monte_2022}
Stanislaus~S. Uyanto.
\newblock Monte {Carlo} power comparison of seven most commonly used heteroscedasticity tests.
\newblock \emph{Communications in Statistics - Simulation and Computation}, 51\penalty0 (4):\penalty0 2065--2082, April 2022.
\newblock ISSN 0361-0918.
\newblock \doi{10.1080/03610918.2019.1692031}.
\newblock Publisher: Taylor \& Francis \_eprint: https://doi.org/10.1080/03610918.2019.1692031.

\bibitem[Wang et~al.(2017)Wang, Kucukelbir, and Blei]{wang_robust_2017}
Yixin Wang, Alp Kucukelbir, and David~M. Blei.
\newblock Robust probabilistic modeling with {Bayesian} data reweighting.
\newblock In \emph{Proceedings of the 34th {International} {Conference} on {Machine} {Learning} - {Volume} 70}, {ICML}'17, pages 3646--3655, Sydney, NSW, Australia, August 2017. JMLR.org.

\bibitem[Yeh(2007)]{yeh_i-cheng_concrete_2007}
I-Cheng Yeh.
\newblock Concrete {Compressive} {Strength}, 2007.

\bibitem[Yu et~al.(2023)Yu, Hannah, Peng, Lin, Bhouri, Gupta, Lütjens, Will, Behrens, Busecke, Loose, Stern, Beucler, Harrop, Hilman, Jenney, Ferretti, Liu, Anandkumar, Brenowitz, Eyring, Geneva, Gentine, Mandt, Pathak, Subramaniam, Vondrick, Yu, Zanna, Zheng, Abernathey, Ahmed, Bader, Baldi, Barnes, Bretherton, Caldwell, Chuang, Han, Huang, Iglesias-Suarez, Jantre, Kashinath, Khairoutdinov, Kurth, Lutsko, Ma, Mooers, Neelin, Randall, Shamekh, Taylor, Urban, Yuval, Zhang, and Pritchard]{yu_climsim_2023}
Sungduk Yu, Walter~M. Hannah, Liran Peng, Jerry Lin, Mohamed~Aziz Bhouri, Ritwik Gupta, Björn Lütjens, Justus~C. Will, Gunnar Behrens, Julius J.~M. Busecke, Nora Loose, Charles Stern, Tom Beucler, Bryce~E. Harrop, Benjamin~R. Hilman, Andrea~M. Jenney, Savannah~L. Ferretti, Nana Liu, Anima Anandkumar, Noah~D. Brenowitz, Veronika Eyring, Nicholas Geneva, Pierre Gentine, Stephan Mandt, Jaideep Pathak, Akshay Subramaniam, Carl Vondrick, Rose Yu, Laure Zanna, Tian Zheng, Ryan~P. Abernathey, Fiaz Ahmed, David~C. Bader, Pierre Baldi, Elizabeth~A. Barnes, Christopher~S. Bretherton, Peter~M. Caldwell, Wayne Chuang, Yilun Han, Yu~Huang, Fernando Iglesias-Suarez, Sanket Jantre, Karthik Kashinath, Marat Khairoutdinov, Thorsten Kurth, Nicholas~J. Lutsko, Po-Lun Ma, Griffin Mooers, J.~David Neelin, David~A. Randall, Sara Shamekh, Mark~A. Taylor, Nathan~M. Urban, Janni Yuval, Guang~J. Zhang, and Michael~S. Pritchard.
\newblock {ClimSim}: {An} open large-scale dataset for training high-resolution physics emulators in hybrid multi-scale climate simulators, September 2023.
\newblock URL \url{http://arxiv.org/abs/2306.08754}.
\newblock arXiv:2306.08754 [physics].

\end{thebibliography}
%\bibliography{bibliography}
\appendix
\pagebreak
\appendix

\section{Theoretical Details}

\subsection{Full Functional Derivatives}
\label{sec:app_derivatives}

Our FT is: 
\begin{align}
\mathcal{L}_{\rho, \gamma}(\hat{\mu},\hat{\Lambda}) &\approx \int_{\mathcal{X}}p(x) \rho\bigg[\frac{1}{2}\hat{\Lambda}(x)\hat{r}(x)^2  - \frac{1}{2}\log \hat{\Lambda}(x)\bigg]\nonumber \\
&\quad +p(x)\bar{\rho}\left[\gamma||\nabla\hat{\mu}(x)||_2^2 + \bar{\gamma}||\nabla\hat{\Lambda}(x)||_2^2 \right]dx\nonumber
\end{align}
and its functional derivatives are
\begin{align}
\begin{cases}
\frac{\delta \mathcal{L}}{\delta \hat{\mu}} & = p(x)\rho\hat{\Lambda}(x)\hat{r}(x) - 2\bar{\rho}\gamma\Delta\hat{\mu}(x)\\
\frac{\delta \mathcal{L}}{\delta \hat{\Lambda}} & = \frac{p(x)\rho}{2} \!\left[\hat{r}(x)^2-\frac{1}{\hat{\Lambda}(x)}\right]\!-2\bar{\rho}\bar{\gamma}\Delta\hat{\Lambda}(x), 
\end{cases}
\end{align}
where $\hat r(x) = y(x) - \hat \mu(x)$
After setting equal to zero we arrive at
\begin{align}
    \begin{cases}
    \hat{\Lambda}^*(x)(\hat{\mu}^*(x)-y(x)) = 
    2\frac{\bar{\rho}}{\rho}\gamma\frac{\Delta\hat{\mu}^*(x)}{p(x)}\\
    (y(x)-\hat{\mu}^*(x))^2 = \frac{1}{\hat{\Lambda}^*(x)}+ 4\frac{\bar{\rho}}{\rho}\bar{\gamma}\frac{\Delta\hat{\Lambda}^*(x)}{p(x)},
    \end{cases}
\end{align}

\subsection{Proofs}
\label{sec:app_proofs}
\setcounter{prop}{0}
\begin{restatable}{prop}{prop:existence}
Assuming there exists twice differentiable functions $\mu: \R^d \to \R, \Lambda: \R^d \to \R_{>0}$, the following properties hold
\begin{enumerate}[label=\roman*]
    \item In the absence of regularization ($\rho = 1$), there are no solutions to the FT.
    \item In the absence of data ($\rho = 0$), there is no unique solution to the FT.
    \item There are no valid solutions to the FT if $\rho\in (0,1), \gamma = 1$. Some regularization on precision function is needed for a solution to potentially exist.
\end{enumerate}
\end{restatable}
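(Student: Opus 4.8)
The plan is to substitute each limiting value of $\rho$ or $\gamma$ into the stationary conditions of \cref{eq:pdes} (equivalently, into the functional derivatives in \cref{sec:app_derivatives}) and to show that the resulting system has either no admissible pair $(\hat{\mu}^*,\hat{\Lambda}^*)$ of twice-differentiable functions with $\hat{\Lambda}^*>0$, or infinitely many. Throughout I write $\bar{\rho}=1-\rho$ and $\bar{\gamma}=1-\gamma$, and I recall that admissibility requires $\hat{\Lambda}^*(x)$ to be finite and strictly positive everywhere.

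For part (i) I set $\rho=1$, so $\bar{\rho}=0$ and the two conditions collapse to $\hat{\Lambda}^*(x)\hat{r}^*(x)=0$ and $\hat{r}^*(x)^2=1/\hat{\Lambda}^*(x)$ almost everywhere. Positivity of $\hat{\Lambda}^*$ forces $\hat{r}^*\equiv 0$, and substituting into the second relation gives $0=1/\hat{\Lambda}^*$, which is impossible for a finite precision; this is the precise form of the $\infty-\infty$ divergence described for the MLE objective. For part (ii) I set $\rho=0$, so the likelihood term drops out and the FT reduces to $\int_{\mathcal{X}} p(x)\bigl[\gamma\|\nabla\hat{\mu}(x)\|_2^2+\bar{\gamma}\|\nabla\hat{\Lambda}(x)\|_2^2\bigr]\,dx$. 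With $\gamma\in(0,1)$ both coefficients are strictly positive, so this nonnegative functional attains its minimum of zero exactly when $\hat{\mu}$ and $\hat{\Lambda}$ are constant (any value, with $\hat{\Lambda}$ positive); every such constant pair is a global minimizer, so no unique solution exists.

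For part (iii) I set $\gamma=1$, so $\bar{\gamma}=0$, which removes the Laplacian term from the second condition and reduces it to the pointwise relation $\hat{\Lambda}^*(x)=1/\hat{r}^*(x)^2$ with $\hat{r}^*=\hat{\mu}^*-y$. I would argue this cannot define an admissible field: since $y(\cdot)$ is a realization of a rough white-noise process while $\hat{\mu}^*$ is twice-differentiable, the residual $\hat{r}^*$ is neither smooth nor bounded away from zero, so $1/(\hat{r}^*)^2$ is neither twice-differentiable nor everywhere finite and positive; in particular it blows up wherever $\hat{r}^*$ vanishes, which is unavoidable (e.g. by an intermediate-value argument once the targets are zero-centered). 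Substituting the relation into the first condition gives $\Delta\hat{\mu}^*(x)=\tfrac{\rho}{2\bar{\rho}}\,p(x)/\hat{r}^*(x)$, whose right-hand side cannot be the Laplacian of a $C^2$ function, reinforcing non-existence. Finally I would contrast this with $\bar{\gamma}>0$, where the surviving term $4\tfrac{\bar{\rho}}{\rho}\bar{\gamma}\,\Delta\hat{\Lambda}^*/p$ supplies exactly the smoothing slack that lets $\hat{\Lambda}^*$ remain a valid smooth positive field, explaining why some precision regularization is necessary.

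Parts (i) and (ii) are immediate once the limits are inserted; the main obstacle is part (iii), where I must justify rigorously that $\hat{\Lambda}^*=1/(\hat{r}^*)^2$ admits no admissible solution. The delicate point is making precise the sense in which $\hat{r}^*$ must vanish or fail to be smooth, which hinges on the regularity ascribed to the white-noise realization $y(\cdot)$ and on whether a solution is understood as a minimizer of the FT or a classical solution of the stationary PDEs. I would therefore state the regularity assumptions on $y(\cdot)$ explicitly and, if the pointwise argument proves too delicate, fall back on showing that the objective is unbounded below (has no stationary point) as the operative characterization of non-existence.
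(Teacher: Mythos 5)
Your parts (i) and (ii) are correct and essentially identical to the paper's own proof: at $\rho=1$ the two stationary conditions jointly yield $0=1$ (your route---positivity of $\ftp^*$ forces $\hat{r}^*\equiv 0$, then $0=1/\ftp^*$---is equivalent to the paper's multiply-through version), and at $\rho=0$ the functional reduces to a nonnegative combination of Dirichlet energies that is minimized by every pair of constant functions, hence no unique solution.

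Part (iii) is where you genuinely diverge, and the comparison cuts both ways. You attack the stated case head-on: $\gamma=1$ kills the Laplacian term in the second condition, forcing $\ftp^*=1/(\hat{r}^*)^2$, and you argue non-existence from blow-up of the precision at zeros (or non-smoothness) of the residual. The paper's proof environment does something different: it reverts to the $(\alpha,\beta)$ parameterization and assumes $\alpha=0$, i.e., no \emph{mean} regularization, so that the first condition forces exact interpolation $\ftmu^*=y$; the second condition then reads $0=1/\ftp^*+4\beta\Delta\ftp^*$ and the contradiction is extracted in the limit $\beta\to 0$. The configuration literally named in the proposition ($\alpha>0$, $\beta=0$) is handled only in the prose \emph{after} the proof, and precisely by the device your argument is missing: the objective is in practice an expectation under the empirical measure $p(x,y)\propto\sum_{(x_i,y_i)}\delta(\|x-x_i\|)\delta(y-y_i)$, under which a single input carries positive mass, so one interpolated point $y(x_i)=\ftmu^*(x_i)$---essentially unavoidable for a regularized, near-constant mean on zero-centered data---already falsifies ``$\hat{r}^*\neq 0$ a.e.'' and rules out any admissible $\ftp^*$. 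This is the concrete gap in your write-up, which you partially acknowledge: in the continuum, a sign change of the residual against a white-noise realization does not produce a zero (the residual is not continuous, so the intermediate-value theorem does not apply), and the claim that $\tfrac{\rho}{2\bar{\rho}}\,p(x)/\hat{r}^*(x)$ ``cannot be the Laplacian of a $C^2$ function'' is asserted rather than proved. To close the argument you need either the paper's finite-data/positive-mass step or explicit regularity assumptions on $y(\cdot)$ (your stated fallback). In short: your version is more faithful to the literal statement $\rho\in(0,1),\gamma=1$ than the paper's own proof environment, but it is incomplete at exactly the point the paper's informal discussion was designed to patch.
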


\begin{proof}
Without loss of generality, we consider a uniform $p(x)$ and drop it from the equations.
\begin{enumerate}[label=(\roman*)]
    \item When $\rho=1$ the necessary conditions for an optima are
    \begin{align}
        &\begin{cases}
            \ftp^*(x)(\ftmu^*(x) - y(x)) = 0 \\
            (\ftmu^*(x) - y(x))^2 = \frac{1}{\ftp^*(x)}
        \end{cases}
        \\
        \implies &\begin{cases}
            \ftp^*(x)(\ftmu^*(x) - y(x)) = 0 \\
            \ftp^*(x)(\ftmu^*(x) - y(x))^2 = 1
        \end{cases}
        \\
        \implies &\begin{cases}
            \ftp^*(x)(\ftmu^*(x) - y(x)) = 0 \\
            0 \times (\ftmu^*(x) - y(x)) = 1
        \end{cases}\\
        \implies & 0 = 1
    \end{align} 
    which is a contradiction and there cannot exist $\mu, \Lambda$ that are solutions. 
    \item When $\rho=0$ the integral we seek to maximize is:
    \begin{align}
        \mathcal{L}_{\rho, \gamma}(\hat{\mu},\hat{\Lambda}) &= \int_{\mathcal{X}} \rho\int_{\mathcal{Y}} p(y\sep x)\log \hat p(y\sep x) dy 
        \nonumber \\ 
        &\quad + \bar{\rho}\left[\gamma||\nabla\hat{\mu}(x)||_2^2  + \bar{\gamma}||\nabla\hat{\Lambda}(x)||_2^2 \right] dx\\
        &= \int_{\mathcal{X}}\left[\gamma||\nabla\hat{\mu}(x)||_2^2  + \bar{\gamma}||\nabla\hat{\Lambda}(x)||_2^2 \right] dx
    \end{align}
    where we $\hat p(y\sep x) = \norm(y\sep \hat\mu(x), \hat\Lambda(x))$.
    Each term in this integral is non-negative, so the minimum value it could be is zero. 
    Any pair of constant functions $\mu, \Lambda$ will minimize this integral, of which there are infinitely many. 
    \item We return to the $(\alpha, \beta)$-parameterization for this proof. Suppose there is no mean regularization, that is $\alpha = 0$. 
    \begin{align}
        \implies &\begin{cases}
            \ftp^*(x)(\ftmu^*(x) - y(x)) = 0 \\
            (\ftmu^*(x) - y(x))^2 = \frac{1}{\ftp^*(x)} + 4 \beta \Delta \ftp^*(x)
        \end{cases}
    \end{align}
    From the first condition we see that there must be perfect matching between $\ftmu^*$ and $y$ since $\ftp^*>0$ in order to define a valid normal distribution.
    \begin{align}
        \implies &\begin{cases}
            \ftmu^*(x) - y(x) = 0 \\
            (\ftmu^*(x) - y(x))^2 = \frac{1}{\ftp^*(x)} + 4 \beta \Delta \ftp^*(x)
        \end{cases}\\
        \implies &
        \begin{cases}
            \ftmu^*(x) - y(x) = 0 \\
            0 = \frac{1}{\ftp^*(x)} + 4 \beta \Delta \ftp^*(x)
        \end{cases}
    \end{align}
    Now, note that as $\beta \rightarrow 0$,
    \begin{align}
        \implies &\begin{cases}
            \ftmu^*(x) - y(x) = 0 \\
            0 = \frac{1}{\ftp^*(x)}
        \end{cases}
    \end{align}
    but $\ftp^* \in \R$, so the second condition can never be satisfied. 
    Thus, in order for a solution to exist if $\alpha = 0 \implies \beta > 0$. These $\alpha,\beta$ values correspond to $\rho \in (0,1), \gamma \neq 1$. 
\end{enumerate}
\end{proof}
This proposition implies the existence, or rather the lack thereof of solutions to the FT. Should there be no mean regularization, then there needs to be at least some present for the precision. The theory potentially suggests that the vice versa of this should also guarantee valid solutions (i.e., $\alpha>0$ and $\beta=0$); however, in practice this does not hold true. The reason lies in the stipulation that $y(x)\neq \ftmu(x)$ a.e.

Typically, this condition can be satisfied while still allowing for countably many values of $x$ in which $y(x)=\ftmu(x)$. The problem is that we fit models using a finite amount of data. As mentioned previously, we typically minimize the objective function by approximating it using a MC estimate with $\mathcal{D}$ as samples. An alternative perspective of this decision is that we are actually calculating the expected values exactly with respect to an empirical distribution imposed by $\mathcal{D}$: $p(x,y) :\propto \sum_{(x_i,y_i)\in\mathcal{D}}\delta(||x-x_i||)\delta(y-y_i)$ where $\delta(\cdot)$ is the Dirac delta function.\footnote{Not to be confused with the functional derivative operator $\delta$.} Because of this, a single value of $x$ can possess non-zero measure, thus it only takes a single instance of $y(x)=\ftmu(x)$ for the statement $y(x)\neq\ftmu(x)$ a.e. to be false. This, unfortunately, is very likely to happen while solving for $\ftmu, \ftp$. \textbf{Thus, we can conclude that no matter what, $\left(\frac{1-\rho}{\rho}\right)(1-\gamma)>0$ for a valid solution to be guaranteed to exist.}

\section{Experimental Details}
\subsection{Datasets}
\label{sec:datasets}

\begin{figure*}
\centering
\includegraphics[width=\textwidth]{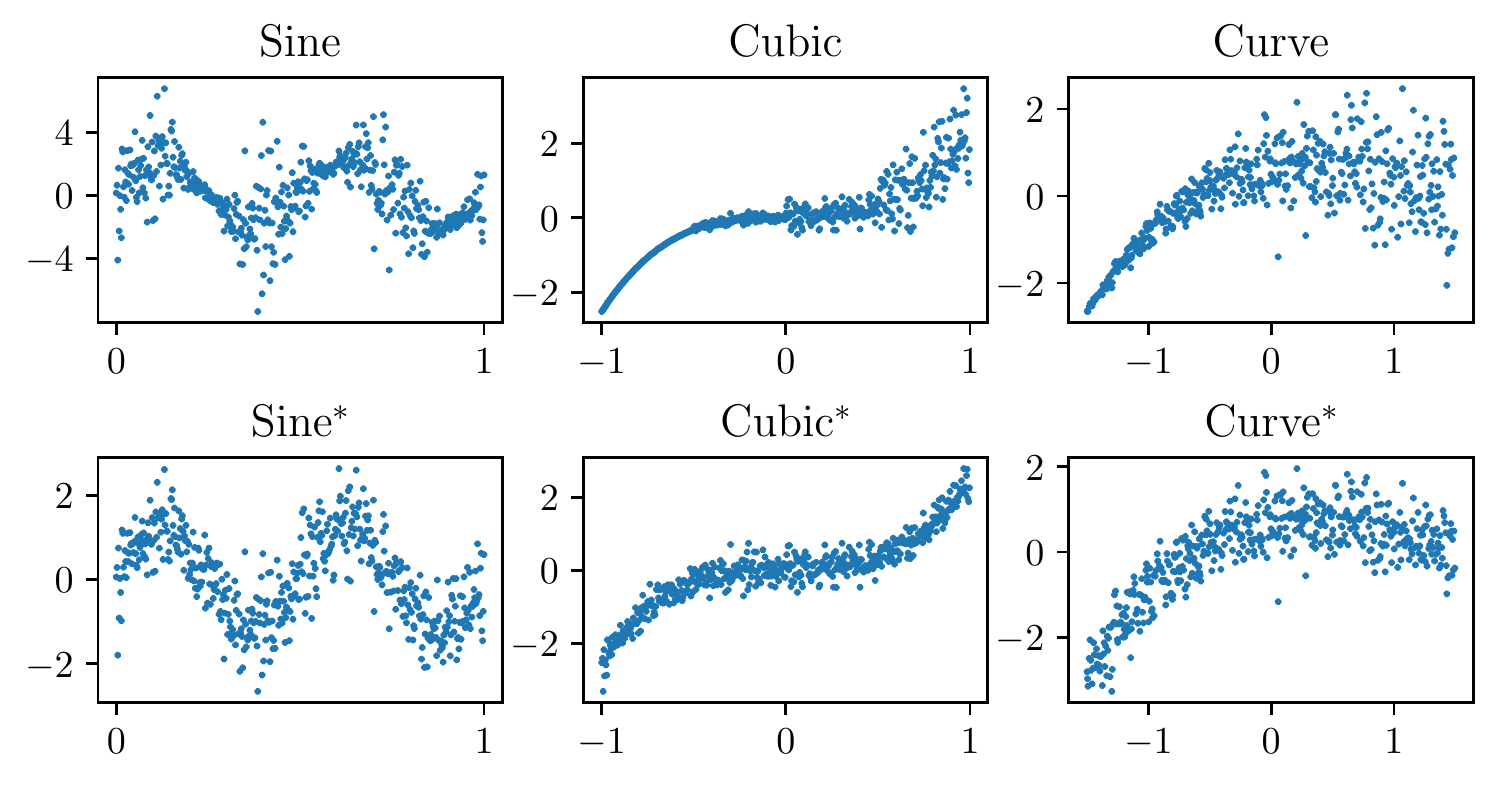}
\caption{Visualization of heteroskedastic and homoskedastic versions of simulated datasets. Specific details for the functional form of these can be found in \cref{tab:sim_data}.}
\label{fig:sim_data_vis}
\end{figure*}

We chose 64 datapoints in each of the simulated datasets. The generating processes for each simulated dataset is included in \cref{tab:sim_data} and can be seen in \cref{fig:sim_data_vis}. The homoskedastic data is simulated in the same way, but with $f(x) = 1$. For testing, we simulate a new dataset of 64 datapoints with the same process. \cref{tab:uci_data} summarizes the UCI datasets. We provide a description of the ClimSim climate data in \cref{sec:climsimdesc}.
\begin{table*} %[htp]
\centering

\caption{Simulated datasets. Each dataset is defined by a true $\mu$ function and then a noise function $f$. All data is generated as $\mu(x) + \epsilon(x)$ where $\epsilon(x) \sim \mathcal{N}(0, f(x)^2)$. After the datasets were generated they were scaled to have mean zero and standard deviation one. The homoskedastic versions of each dataset fix $f(x)=1$. The datasets are shown in \cref{fig:sim_data_vis}.}
\begin{tabular}{ llll }
\toprule 
Dataset  & Mean ($\mu$) & Noise Pattern ($f$) & Domain
\\
\midrule
Sine &  $\mu(x) = 2 \sin(4\pi x)$ & $f(x) = \sin(6\pi x) + 1.25$ & $x\in [0, 1]$\\
\midrule 
Cubic   &  $\mu(x) = x^3 
\; $ & $f(x) = \begin{cases}0.1 & \text{for } x < -0.5\\1 & \text{for }x \in [-0.5, 0.0) \\ 3 & \text{for } x \in [0.0, 0.5) \\ 10 & \text{for } x \ge .5 \end{cases}$&$x\in [-1, 1]$
\\
\midrule
Curve  &  $\mu(x) = x - 2x^2 + 0.5x^3$ & $f(x) = x + 1.5$ & $x \in [-1.5, 1.5]$
\\
\bottomrule
\end{tabular}
\label{tab:sim_data}
\end{table*}

\begin{table*} %[htp]
\centering

\caption{UCI dataset.}

\begin{tabular}{ lccc }
\toprule 
Dataset & Train Size & Test Size & Input Dimension 
\\
\midrule
Concrete &  687 & 343 & 8 \\
\midrule 
Housing   &  337 & 168 & 13 
\\
\midrule
Power  &  6379 & 3189 & 4 
\\
\midrule
Yacht & 204 & 102 & 6
\\
\bottomrule
\end{tabular}
\label{tab:uci_data}
\end{table*}

\subsection{Training Details}
\label{sec:training}
We take 22 values of $\gamma, \rho$ that range from $10^{-10}$ up to $1-10^{-5}$ on a logit scale for all of the experiments run on neural networks.
The exact values were $(\rho, \gamma \in \{0.9999, 0.999, 0.99, 0.9, 0.8, 0.7, 0.6, 0.5, 0.4, 0.3, 0.2, \\
0.1, 0.01, 0.001, 0.0001, 0.00001, 0.000001, 0.0000001,\\
0.00000001, 0.000000001, 0.0000000001,\\
0.00000000001\})$.
For the FTs we take 20 values from  $10^{-6}$ up to $1-10^{-7}$ also on a logit scale $(\rho, \gamma \in \{0.999999, 0.99999, 0.99999, 0.9999, 0.999, 0.99, 0.9,\\
0.8, 0.7, 0.6, 0.5, 0.4, 0.3, 0.2, 0.1, 0.01, 0.001, \\
0.0001, 0.00001, 0.000001\})$. This scaling increases the absolute density of points evaluated near the extreme cases of 0 and 1 where the theoretical analysis of the FT focused. 
The ranges differ slightly due to numerical stability during the fitting.
The limiting cases of $\gamma, \rho \in \{0, 1\}$ were omitted for numerical stability and the ranges of values for the FTs vs neural networks vary slightly for the same reason. 
The values of $\rho, \gamma$ that were taken along the $\rho = 1- \gamma$ line were $\rho, \gamma \in \{0.0, 1.0\times 10^{-11}, 1.0\times 10^{-10}, 1.0\times 10^{-9}, 1.0\times 10^{-8}, 1.0\times 10^{-7},
        1.0\times 10^{-6}, 1.0\times 10^{-5}, 1.0\times 10^{-4},\\ 1.0\times 10^{-3}, 0.01, 0.1,
        0.11, 0.12, 0.13, 0.14, 0.15, 0.16, \\
        0.17, 0.18, 0.19, 0.20, 0.21, 0.22,
        0.23, 0.24, 0.25, 0.26,\\ 0.27, 0.28,
        0.29, 0.30, 0.31, 0.32, 0.33, 0.34,
        0.35, 0.36,\\ 0.37, 0.38, 0.39, 0.40,
        0.41, 0.42, 0.43, 0.44, 0.45, 0.46,\\
        0.47, 0.48, 0.49, 0.50, 0.51, 0.52,
        0.53, 0.54, 0.55, 0.56,\\ 0.57, 0.58,
        0.59, 0.60, 0.61, 0.62, 0.63, 0.64,
        0.65, 0.66,\\ 0.67, 0.68, 0.69, 0.70,
        0.71, 0.72, 0.73, 0.74, 0.75, 0.76,\\
        0.77, 0.78, 0.79, 0.80, 0.81, 0.82,
        0.83, 0.84, 0.85, 0.86,\\ 0.87, 0.88,
        0.89, 0.9, 0.99, 0.999, 0.9999, 1.0\}$.
All experiments were run on Nvidia Quadro RTX 8000 GPUs. Approximately 400 total GPU hours were used across all experiments. 

\subsection{Metrics}
\label{sec:metrics}
\begin{itemize}
    \item Geometric complexity: For the one-dimensional datasets the function is evaluated on a dense grid and then the gradients are approximated via finite differences and a trapezoidal approximation to the integral is taken. In the case of the FT, we only assess the function on the solved for, discretized points while with the neural networks we interpolate between points. 
    For the higher-dimensional UCI datasets the gradients are also numerically approximated in the same way but only at the points in the train/test sets. 
    \item MSE: In the fully non-parametric, unconstrained setting, the maximum likelihood estimates at each $x_i$ are $\hat \mu(x_i) = y(x_i)$ and $\hat \Lambda(x_i) = (y(x_i) - \mu(x_i))^{-2} \implies \hat \Lambda^{-1/2}(x_i) = |y(x_i) - \mu(x_i)|$, serving as motivation for checking these differences. 
\end{itemize}

\paragraph{Variability over runs}
The experiments were each run six times with different seeds.  
The standard deviations over the metrics displayed in \cref{fig:summary} are shown in \cref{fig:sd_summary}. 
The Sobolev norms show that there is the most variability in the overfitting regions $O_{\text{I}}$ and parts of $O_\text{II}$. 
This indicates that the functions themselves vary across runs. 
However, when turning to quality of fits, the MSEs show a different pattern of regions of instability, and $O_\text{I}$ has low variability in terms of actual performance. 

\begin{figure*}
\centering
\includegraphics[width=\textwidth]{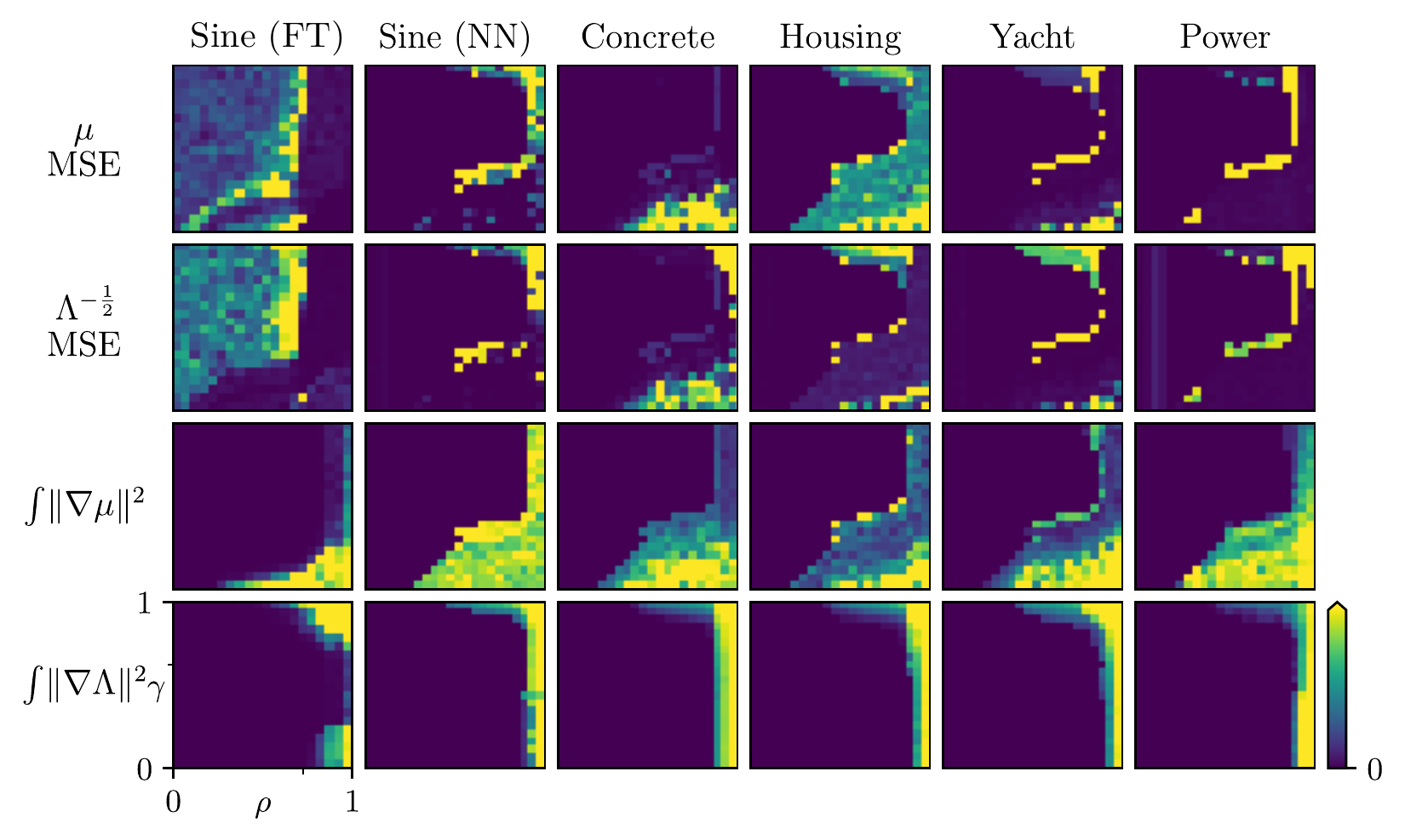}
\caption{The standard deviation over the six runs of each metric shown in \cref{fig:summary}}
\label{fig:sd_summary}
\end{figure*}

\subsection{Field Theory}
\label{sec:FT}
For the discretized field theory we take $n_{ft}=4096$ evenly spaced points on the interval $[-1, 1]$. 
There are two datapoints placed beyond $[-1, 1]$ because the method we use to estimate the gradients requires the datapoints to have left and right neighbors. These datapoints were not included when computing our metrics. 
Of these 4096 datapoints 64 were randomly selected to be used for training neural networks $\nnmu, \nnLambda$.
The field theory results were consistent across choices of $n_{ft}\in\{256, 512, 1024, 2048, 4096\}$. 
We present results for $n_{ft}=4096$ in the main paper. 
We train for $100000$ epochs and use the Adam optimizer with a basic triangular cycle that scales initial amplitude by half each cycle on the learning rate. The minimum and maximum learning rates were 0.0005 and 0.01. The cycles were 5000 epochs long. We clip the gradients at 1000.

\subsection{Simulated Data with Neural Networks}
\label{sec:nn-sim}
For all of the simulated datasets except for \emph{sine} we train for $600000$ epochs and use the Adam optimizer with a basic triangular cycle that scales initial amplitude by half each cycle on the learning rate. The minimum and maximum learning rates were 0.0001 and 0.01. The cycles were 50000 epochs. 
The first 250000 epochs are only spend on training $\nnmu$ while the remaining 350000 epochs are spent training both $\nnmu, \nnLambda$. 
We clip the gradients at 1000. 
The training for the \emph{sine} dataset was the same, except trained for $2500000$ epochs.

\subsection{UCI Data with Neural Networks}
\label{sec:uci-sim}
For the \emph{concrete, housing} and \emph{yacht} datasets we train for $500000$ epochs and use the Adam optimizer with a basic triangular cycle that scales initial amplitude by half each cycle on the learning rate. The minimum and maximum learning rates were 0.0001 and 0.01. The cycles were 50000 epochs. 
The first 250000 epochs are only spend on training $\nnmu$ while the remaining 250000 epochs are spent training both $\nnmu, \nnLambda$. 
Meanwhile on the \emph{power} dataset, we had to use minibatching due to the size of the dataset. We used minibatches of 1000 and trained for 50000 total epochs with the first 25000 dedicated solely to $\nnmu$ and the remainder training both $\nnmu, \nnLambda$. The same cyclic learning rate was used but with cycle length 5000. 
We clip the gradients at 1000.

\subsection{Practical Suggestion}
\label{sec:diagsearch}
We can also view the $\rho = 1-\gamma$ line that we search from the perspective of the $\alpha, \beta$ parameterization of the regularizers. 
Let $\rho, \gamma \in (0,1)$ such that 
\begin{align*}
\rho = 1 - \gamma.
\end{align*}
Furthermore, we know that $\alpha=\frac{1-\rho}{\rho}\gamma$ and that $\beta=\frac{1-\rho}{\rho}(1-\gamma)$.
If we are interested in the model settings for $(\rho(t)=t, \gamma(t)=1-t)$ for $t\in(0,1)$, it then follows that we are equivalently interested in
\begin{align*}
(\alpha(t), \beta(t)) & = \left(\frac{1-\rho(t)}{\rho(t)}\gamma(t), \frac{1-\rho(t)}{\rho(t)}(1-\gamma(t))\right) \\
& = \left(\frac{1-t}{t}(1-t), \frac{1-t}{t}t\right) \\
& = \left(\frac{(1-t)^2}{t}, 1-t \right) \\
\implies &\begin{cases}
 t & = 1- \beta(t) \\
 \alpha(t) & = \frac{\beta(t)^2}{t} \\
\end{cases}
\text{ or } \sqrt{t\alpha(t)} = \beta(t)
\end{align*}

\section{Additional Results}

\subsection{All Synthetic Dataset Results}
Both FT and neural networks were fit to the heteroskedastic and homoskedastic synthetic datasets described in \cref{tab:sim_data}. The main results for these displayed as phase diagrams of various metrics can be seen in \cref{fig:synth_ft} and \cref{fig:synth_nn} respectively. We largely see the same trends as were exhibited by the real-world datasets seen in \cref{fig:summary}.

\begin{figure*}
\centering
\includegraphics[width=\textwidth]{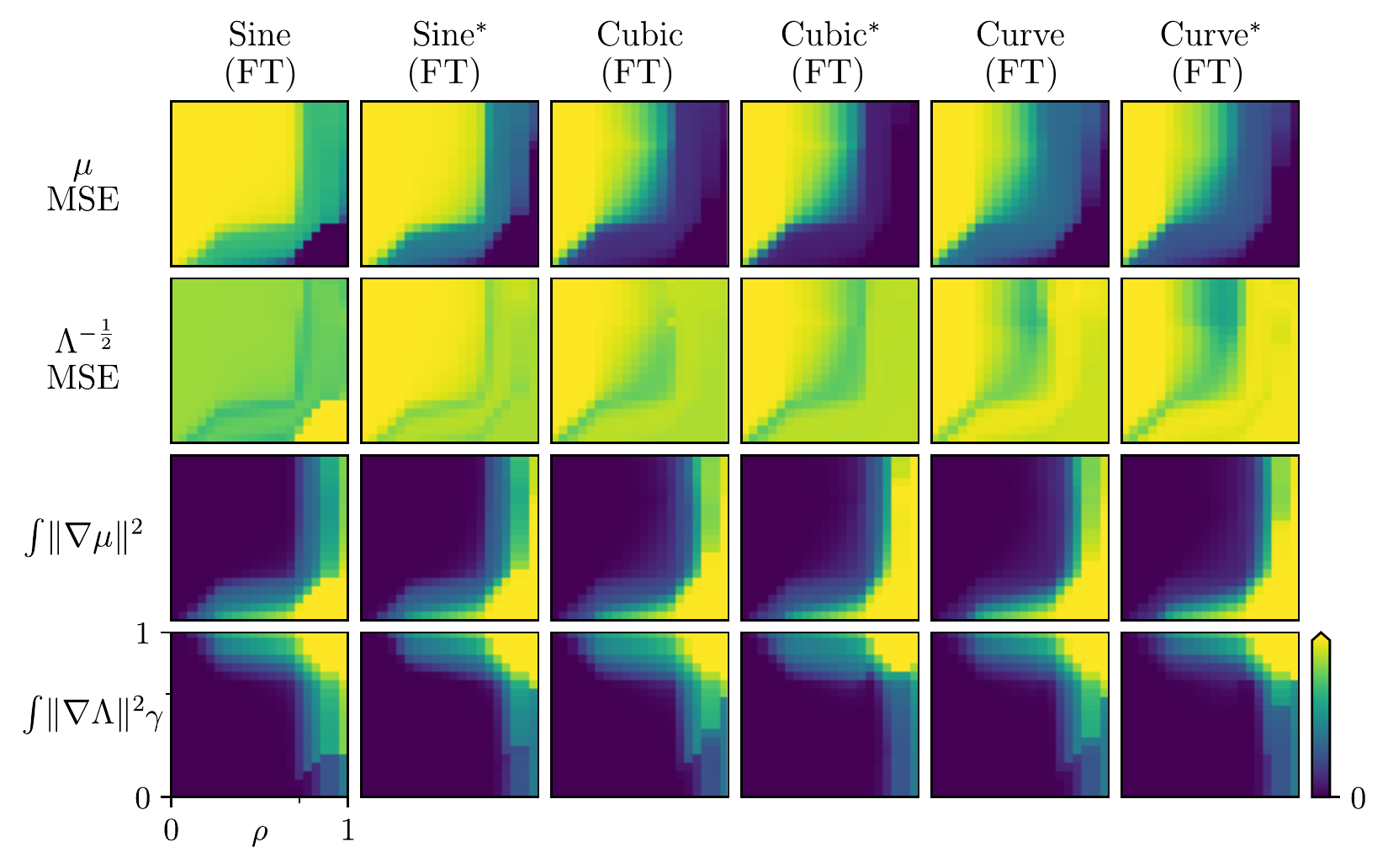}
\caption{Same configuration as \cref{fig:summary}, except all results here pertain to minimizing the FT on six different synthetic datasets described in \cref{tab:sim_data}. Dataset names with an $*$ are the homoskedastic counterparts.}
\label{fig:synth_ft}
\end{figure*}

\begin{figure*}
\centering
\includegraphics[width=\textwidth]{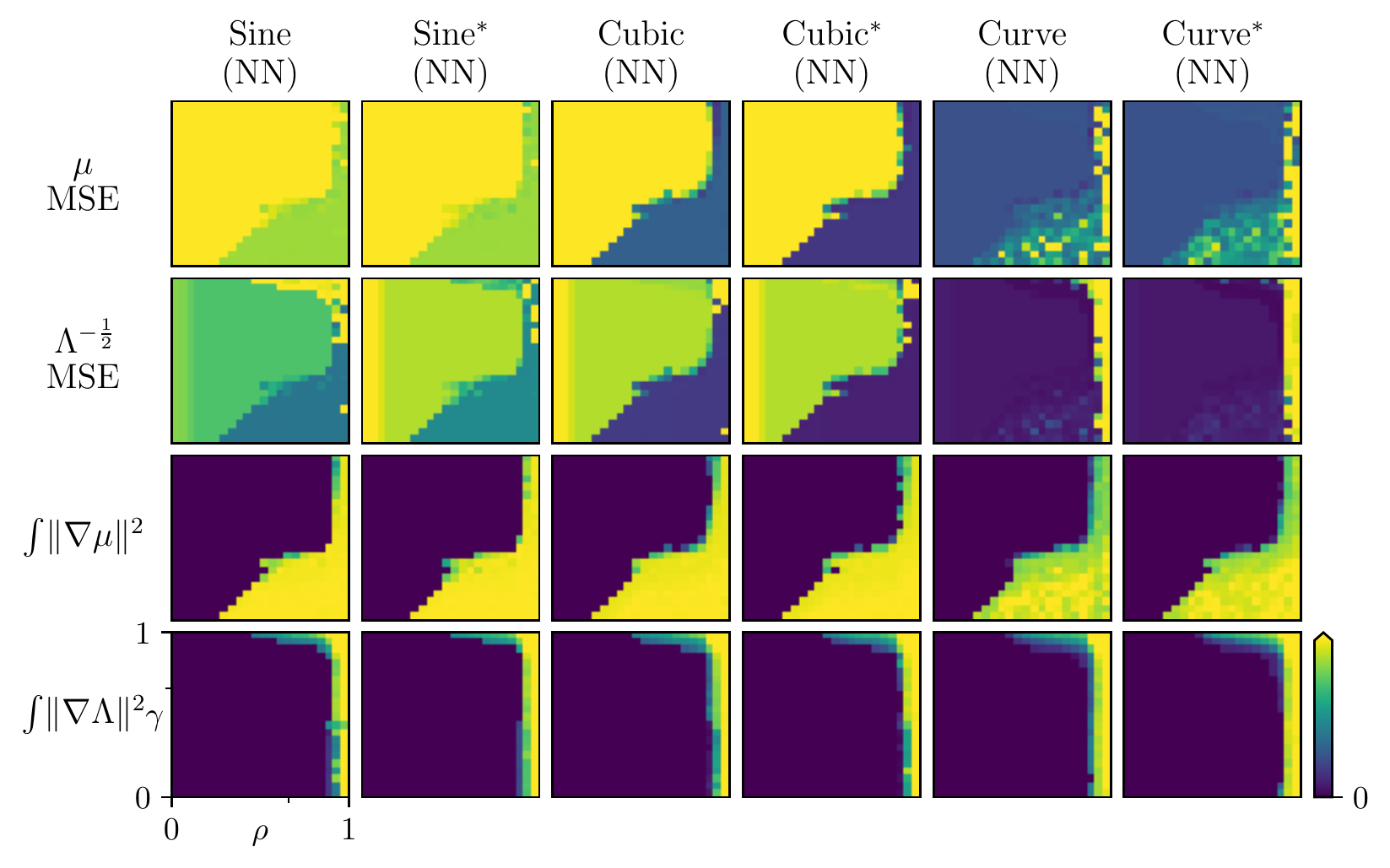}
\caption{Same configuration as \cref{fig:summary} and \cref{fig:synth_ft}, except all results here pertain to training a neural network on six different synthetic datasets described in \cref{tab:sim_data}. Dataset names with an $*$ are the homoskedastic counterparts.}
\label{fig:synth_nn}
\end{figure*}

\subsection{Effect of Neural Network Size}
We used the same training methods to fit models with one and two hidden layers and fit them to the \emph{concrete} dataset. 
The results in the phase diagram were consistent with the other experiments, as can be seen in \cref{fig:nn_ablation}.

\begin{figure*}
\centering
\includegraphics[width=0.6\textwidth]{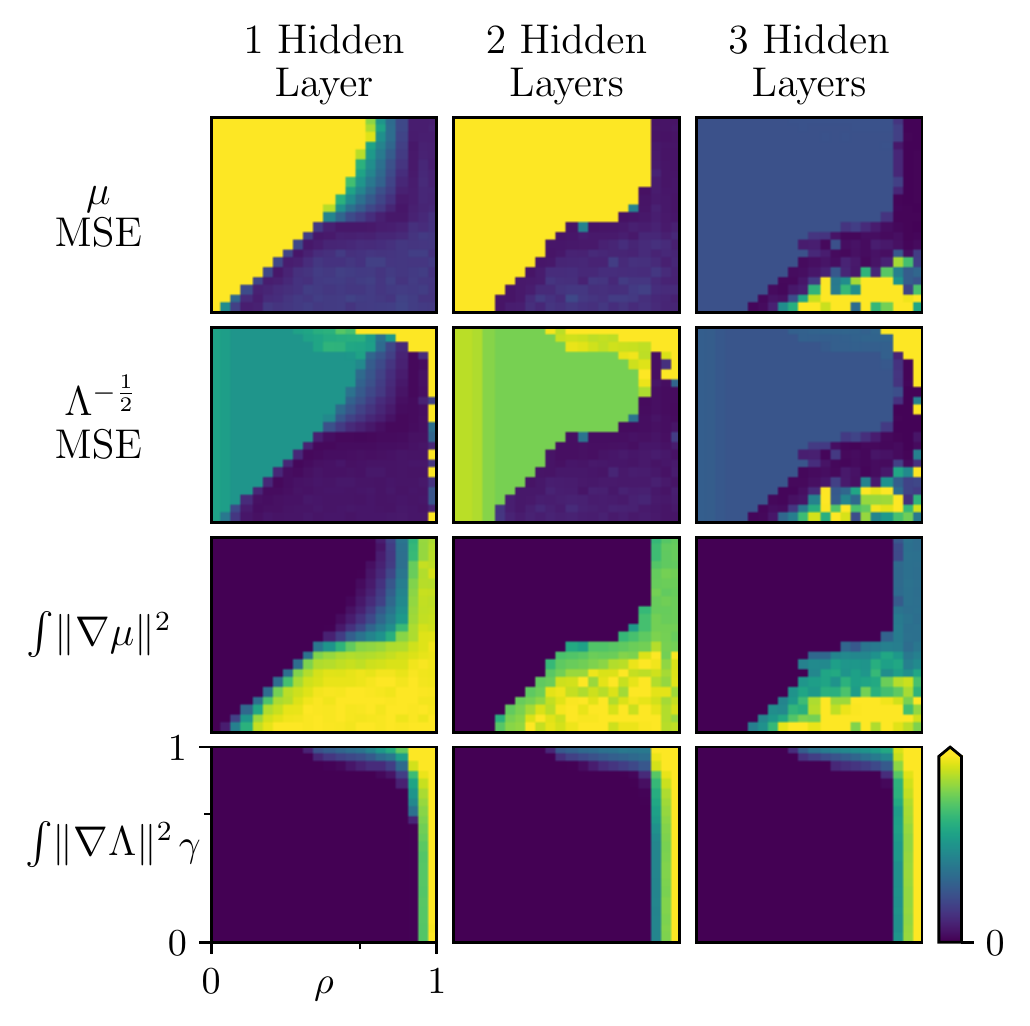}
\caption{Same configuration as \cref{fig:summary}, however, these results pertain all to fitting a neural network of various sizes on the \emph{concrete} dataset.}
\label{fig:nn_ablation}
\end{figure*}

\section{Comparison to Baselines}\label{sec:baseline_comp}
We compare the performance of our diagonal $\rho+\gamma=1$ search against two baselines, $\beta$-NLL \citep{seitzer_pitfalls_2022} and an ensemble of six MLE-fit heteroskedastic regression models \citep{lakshminarayanan_simple_2017}. We use $\mu$ MSE, $\Lambda^{-\frac{1}{2}}$ MSE, and expected calibration error (ECE) to evaluate the models. In all cases lower values are better. Note that the method of ensembling multiple individual MLE-fit models from \cite{lakshminarayanan_simple_2017} could be implemented on our method or $\beta$-NLL as well.

\subsection{Model Architecture}
All (individual) models have the same architecture: fully connected neural networks with three hidden layers of 128 nodes and leaky ReLU activations for the synthetic and UCI datasets and fully connected neural networks with three hidden layers of 256 nodes for the ClimSim data \citep{yu_climsim_2023}. 
Note that both baselines model the variance while our approach models the precision (inverse-variance).
In all cases we use a softplus on the final layer of the variance/precision networks to ensure the output is positive.

For the $\beta-$NLL implementation we take $\beta=0.5$ as suggested in \cite{seitzer_pitfalls_2022}. 
The ensemble method we use fits 6 individual heteroskedastic neural networks and combines their outputs into a mixture distribution that is approximated with a normal distribution. We do not add in adversarial noise as the authors state it does not make a significant difference. We fit six $\beta-$NLL models and six MLE-ensembles.

\subsection{Diagonal selection criteria}
After conducting our diagonal search we found the model that minimized $\mu$ MSE and the model that minimized $\Lambda^{-\frac{1}{2}}$ MSE on the \emph{training} data. 
In some cases these models coincided. 
We then used the model that was on the midpoint (on a logit scale) of the $\rho+\gamma=1$ line between these two models to compare.
The results are reported in Table \ref{tab:baselines}. In all cases our method is competitive with or exceeds the performance of these two baselines--particularly on real-world data.

\subsection{Training Details}
Training for our method was conducted as described in  sections \ref{sec:training} and \ref{sec:uci-sim} of the appendix.

For the baselines, on all of the simulated datasets we train for $600000$ epochs and use the Adam optimizer with a basic triangular cycle that scales initial amplitude by half each cycle on the learning rate. The minimum and maximum learning rates were 0.0001 and 0.01. The cycles were 50000 epochs. 
We clip the gradients at 1000. The same optimization scheme is performed for the UCI datasets but for $500000$ epochs for the \emph{Housing}, \emph{Concrete}, and \emph{Yacht} datasets. The \emph{Power} dataset was trained for $50000$ epochs with batches of 1000.

\subsection{ClimSim Dataset}\label{sec:climsimdesc}
The ClimSim dataset \citep{yu_climsim_2023} is a largescale climate dataset. 
Its input dimension is 124 and output dimension is 128. 
We use all 124 inputs to model a single output, \emph{solar flux}.
We train on 10,091,520 of the approximately 100 million points for training and we use a randomly selected 7,209 points to evaluate our models.

\subsection{Deficiency of ECE}
Shortcomings of ECE (in isolation) are well documented \citep{kuleshov_accurate_2018, levi_evaluating_2022, chung_beyond_2021}. 
The main issue with ECE is it measures \emph{average} calibration, while \emph{individual} calibration is more desirable. 
On our diagonal search we found that often times the models that achieved the best ECE were those that were severely underfit and belonged to region $U_{I}$. 
In Table \ref{tab:baselines} we see that the MLE-ensemble is able to achieve low scores while being uncompetitive with respect to the two MSE metrics.
The MLE-ensembles were unstable on several of the datasets with respect to the variance network which is consistent with Proposition \ref{prop:existence}. 
In particular this can be seen for the synthetic datasets the $\Lambda^{-\frac{1}{2}}$ MSE diverges to infinity.

\begin{table*}[htp]
\centering

\caption{Comparison of our method against two baselines. We report the average and standard deviations of expected calibration error (ECE), $\mu$ MSE and $\Lambda^{-\frac{1}{2}}$ MSE on test data. Lowest mean value for each metric is bolded. Note that the method of ensembling multiple individual MLE-fit models could be performed on our method or $\beta$-NLL as well.}

\begin{tabular}{ l *{3}{c} }
\toprule
{\textbf{Dataset}}&
{\textbf{Ours}} &
{$\beta$-NLL} &
{MLE Ensemble} \\
{\phantom{..........}Metric} & {} &
{ \cite{seitzer_pitfalls_2022}} &
{\cite{lakshminarayanan_simple_2017}} \\
\midrule
Cubic & & &\\
\phantom{..........} ECE & \textbf{0.2380} ± 0.03 & 0.2385 ± 0.02 & 0.2411 ± 0.02 \\
\phantom{..........} $\mu$ MSE  & 0.2339 ± 0.01 & \textbf{0.1500} ± 0.01 & 1.1809 ± 1.88 \\
\phantom{..........} $\Lambda^{-\frac{1}{2}}$MSE  & 0.2397 ± 0.02 & \textbf{0.1397} ± 0.01 & inf ± nan     \\
Curve & & & \\
\phantom{..........} ECE  & 0.1804 ± 0.02 & \textbf{0.1754} ± 0.02 & 0.2432 ± 0.00 \\
\phantom{..........}  $\mu$ MSE & \textbf{0.4318} ± 0.12 & 0.4877 ± 0.16 & 1.0067 ± 0.19 \\
\phantom{..........} $\Lambda^{-\frac{1}{2}}$MSE  & 0.4655 ± 0.09 & \textbf{0.4187} ± 0.20 & inf ± nan     \\
Sine & & & \\
\phantom{..........} ECE   & 0.2499 ± 0.00 & \textbf{0.2082} ± 0.03 & 0.2313 ± 0.05 \\
\phantom{..........} $\mu$ MSE   & \textbf{0.7968} ± 0.00 & 4.4107 ± 6.90 & 0.9716 ± 0.06 \\
\phantom{..........} $\Lambda^{-\frac{1}{2}}$MSE   & \textbf{0.7968} ± 0.00 & 4.3524 ± 6.89 & inf ± nan     \\
\midrule
Concrete & & & \\
\phantom{..........} ECE  & 0.2471 ± 0.01 & 0.2552 ± 0.00           & \textbf{0.0655} ± 0.01           \\
\phantom{..........} $\mu$ MSE  & \textbf{0.1055} ± 0.02 & 14.9882 ± 28.75         & 2.2454 ± 1.74           \\
\phantom{..........} $\Lambda^{-\frac{1}{2}}$MSE  & \textbf{0.3028} ± 0.51 & $1.3 \times 10^5$ ± $2.7 \times 10^5$ & $1.3 \times 10^5$ ± $1.2 \times 10^5$ \\
Housing & & & \\
\phantom{..........} ECE   & \textbf{0.0653} ± 0.00 & 0.2631 ± 0.01           & 0.1332 ± 0.02           \\
\phantom{..........} $\mu$ MSE   & \textbf{1.2236} ± 0.00 & 851.8968 ± 1985.56      & 155.4494 ± 128.27       \\
\phantom{..........} $\Lambda^{-\frac{1}{2}}$MSE   & \textbf{0.7610} ± 0.00 & 851.8959 ± 1985.56      & 218.8269 ± 195.38       \\
Power & & & \\
\phantom{..........} ECE    & 0.2233 ± 0.01 & 0.2370 ± 0.00           & \textbf{0.0285} ± 0.01           \\
\phantom{..........} $\mu$ MSE    & 0.0350 ± 0.01 & 0.0313 ± 0.01           & \textbf{0.0177} ± 0.00           \\
\phantom{..........} $\Lambda^{-\frac{1}{2}}$MSE    & 0.0343 ± 0.01 & 0.0360 ± 0.01           & \textbf{0.0091} ± 0.00           \\
Yacht & & & \\
\phantom{..........} ECE    & 0.3038 ± 0.04 & 0.2882 ± 0.02           & \textbf{0.0463} ± 0.02           \\
\phantom{..........} $\mu$ MSE    & \textbf{0.0077} ± 0.01 & 34.1239 ± 194.87        & 6.2670 ± 13.96          \\
\phantom{..........} $\Lambda^{-\frac{1}{2}}$MSE    & \textbf{0.0076} ± 0.01 & 34.1237 ± 194.87        & 8.0599 ± 19.18          \\
\midrule
Solar Flux & & & \\
\phantom{..........} ECE  & \textbf{0.1503} ± 0.00 & 0.3007 ± 0.00 & 0.1924 ± 0.04                    \\
\phantom{..........} $\mu$ MSE  & \textbf{0.2887} ± 0.00 & 0.4877 ± 0.16 & 1.0067 ± 0.19                    \\
\phantom{..........} $\Lambda^{-\frac{1}{2}}$MSE  & \textbf{0.1175} ± 0.00 & 0.2881 ± 0.01 & \hspace{0.85em} $4.6\times10^{9}$ ± $9.9\times 10^{9}$  \\
\bottomrule
\end{tabular}
\label{tab:baselines}
\end{table*}
\end{document}